%% file: main.tex
\documentclass{article} % For LaTeX2e
\usepackage{iclr2026_conference,times}

\usepackage{amsmath}
\usepackage{amssymb}
\usepackage{amsthm}
\usepackage{mathtools}
\usepackage{algorithm}
\usepackage{algpseudocode}
\usepackage{subfigure}
\usepackage{enumitem}
\DeclareMathOperator*{\argmax}{arg\,max}
\DeclareMathOperator*{\argmin}{arg\,min}
\newtheorem{proposition}{Proposition}
\newtheorem{definition}{Definition}
\newtheorem*{proposition*}{Proposition 1}

\usepackage{hyperref}       % hyperlinks
\hypersetup{               
    colorlinks=true,                
    urlcolor= blue,                
    linkcolor= blue,
    citecolor=blue,
}
\usepackage{url}            % simple URL typesetting
\usepackage{booktabs}       % professional-quality tables
\usepackage{amsfonts}       % blackboard math symbols
\usepackage{nicefrac}       % compact symbols for 1/2, etc.
\usepackage{microtype}      % microtypography
\usepackage[dvipsnames]{xcolor}
\usepackage{cleveref}
\usepackage{xcolor}

\Crefname{equation}{Eq.}{Eqs.}
\Crefname{figure}{Fig.}{Figs.}

\newcommand{\edit}[1]{\textcolor{black}{#1}}

\input{math_commands.tex}

\title{Amortising Inference and Meta-Learning \\
Priors in Neural Networks}

\author{Tommy Rochussen$^{1,2,3}$ \& Vincent Fortuin$^{1,2,4}$ \\
$^1$ Helmholtz AI, $^2$ Munich Center for Machine Learning, $^3$ Technical University of Munich,\\$^4$ University of Technology Nuremberg.\\
\texttt{\{thomas.rochussen, vincent.fortuin\}@helmholtz-munich.de} \\
}

\iclrfinalcopy % Uncomment for camera-ready version, but NOT for submission.
\begin{document}

\maketitle
\vspace{-5mm}
\begin{abstract}
One of the core facets of Bayesianism is in the updating of prior beliefs in light of new evidence---so how can we maintain a Bayesian approach if we have no prior beliefs in the first place? This is one of the central challenges in the field of Bayesian deep learning, where it is not clear how to represent beliefs about a prediction task by prior distributions over model parameters. Bridging the fields of Bayesian deep learning and probabilistic meta-learning, we introduce a way to \textit{learn} a weights prior from a collection of datasets by introducing a way to perform per-dataset amortised variational inference. The model we develop can be viewed as a neural process whose latent variable is the set of weights of a BNN and whose decoder is the neural network parameterised by a sample of the latent variable itself. This unique model allows us to study the behaviour of Bayesian neural networks under well-specified priors, use Bayesian neural networks as flexible generative models, and perform desirable but previously elusive feats in neural processes such as within-task minibatching or meta-learning under extreme data-starvation.
\end{abstract}

\section{Introduction}
While the ability to learn hierarchical representations of data has allowed neural networks to boast phenomenal predictive performance across many domains, enabling them to accurately estimate their predictive uncertainty remains generally unsolved. Bayesian deep learning \citep{mackay1992practical} promises a theoretically sound approach for endowing representation learners with uncertainty quantification, but there are many difficulties with the approach in practice. Of all of them, one of the most sleep-depriving is the question of how to choose appropriate priors. Neural network weights lack interpretability, meaning it is fiendishly difficult to elicit priors over them that are sensible in prediction space \citep{fortuin2022priors}. The convenience priors that we generally use are known to reduce large\footnote{In the sense of infinite architecture width, architecture depth, number of channels, or so on.} Bayesian neural networks (BNNs) to the ``simple smoothing devices'' \citep{mackay1998introduction} that are Gaussian processes \citep[GPs;][]{neal1995bayesian, matthews2018gaussian, yang2019scaling}, meaning that in trying to achieve uncertainty quantification we inadvertently destroy the ability to learn hierarchical representations \citep{aitchison2020why}. An increasingly popular approach to specifying priors in BNNs is to use function-space priors \citep{flam2017mapping, sunfunctional, cinquin2024fsp}. However, these priors are most often chosen to be GP priors which, yet again, reduce Bayesian deep learning to approximate GP inference. Even when priors and architectures are chosen specifically to avoid GP behaviour, the resulting stochastic process prior is not just poorly understood, but generally a bad model of the real-world data-generating process \edit{\citep{karaletsos2020hierarchical}}.

On the other hand, neural processes \citep[NPs;][]{garnelo2018conditional, garnelo2018neural} use the shared structure between related tasks to meta-learn a free-form stochastic process prior. Unlike GPs and BNNs, they do not learn an explicit parametric prior that can be evaluated or sampled from, but, given some context observations, they learn to map directly to the stochastic process posterior corresponding to the learned implicit prior process. For a sufficiently flexible NP architecture and with enough data, they can model the ground-truth data-generating process \citep{foong2020meta}. Given this remarkable ability, neural processes have enabled practitioners to \textit{endow representation learners with uncertainty quantification} to great success across a range of domains including weather and climate applications \citep{allen2025end, ashman2024gridded, andersson2023environmental}, causal machine learning \citep{dhir2025meta}, Bayesian optimisation \citep{maraval2023end, volpp2023accurate}, and cosmological applications \citep{park2021neural, pondaven2022convolutional}.

Drawing inspiration from neural processes, we are interested in meta-learning BNN priors that encode the shared structure across a related set of tasks. In other words, \textit{can we use meta-learning to design well-specified priors in Bayesian deep learning?} To that end, \textbf{we devise a scheme for performing per-dataset amortised inference in BNNs}. This results in a neural process whose latent variable is the set of weights of a BNN, and whose decoder is the neural network parameterised by a particular latent variable posterior sample. We refer to our model as the Bayesian neural network process (BNNP). The unique ability of the BNNP to amortise BNN inference and meta-learn BNN priors \textbf{enables us to investigate such previously unanswerable questions} as: ``\textit{under a well-specified prior, how important is the approximate inference method in Bayesian deep learning really?}''. Furthermore, as a new member of the neural process family \citep{dubois2020npf}, the BNNP introduces some completely novel capabilities. These include the ability to perform \textbf{\textit{within-task} minibatching} for scalability to massive context sets, and the ability to \textbf{adjust the flexibility of the learned prior} so that overfitting can be avoided in settings for which only a few tasks have been observed. 

\section{The Bayesian Neural Network Process}
\subsection{Layerwise Inference}
We start by considering the conditional posterior over the last-layer weights $\mathbf{W}^L\in\mathbb{R}^{d_{L-1}\times d_L}$ in an $L$-layered multilayer perceptron (MLP), where $d_l$ denotes the number of units in the $l$-th layer of the network. In this exposition we do not consider biases but they are straightforward to include in practice. Given some data $\mathcal{D}=\{\mathbf{X}, \mathbf{Y}\}$ where $\mathbf{X}\in\mathbb{R}^{n\times d_0}$ and $\mathbf{Y}\in\mathbb{R}^{n\times d_L}$ as well as the weights of the previous layers $\mathbf{W}^{1:L-1}=\{\mathbf{W}^{l}\}_{l=1}^{L-1}$, and assuming a Gaussian likelihood, the conditional posterior over the last-layer's weights is of the form
\begin{equation}\label{eqn:last_layer}
    p(\mathbf{W}^L|\mathbf{W}^{1:L-1}, \mathcal{D}) \propto p(\mathbf{W}^L)\prod_{d=1}^{d_L}\prod_{n=1}^N\mathcal{N}\Bigl(y_{n,d};{\phi(\mathbf{x}^{L-1}_n)}^\top\mathbf{w}_d^L, \sigma_d^2\Bigr)
\end{equation}where $y_{n,d}$ is the $d$-th dimension of the $n$-th target, $\phi(\cdot)$ is the MLP's elementwise nonlinearity, $\mathbf{x}^{L-1}_{n}$ denotes the activations of the penultimate layer for the $n$-th datapoint, $\mathbf{w}_d^L$ is the $d$-th column of $\mathbf{W}^L$ representing all input weights to unit $d$ in the output layer, and $\sigma_d$ is the observation noise level for the $d$-th dimension of the targets. Assuming that the prior $p(\mathbf{W}^L)$ is conjugate, this posterior is available in closed-form via Bayesian linear regression (BLR; \citealp{bishop2007}). Inspired by this result, \cite{ober2021global} propose a variational posterior for BNNs and deep GPs that decomposes into a product of layerwise conditionals
\begin{equation}
    q(\mathbf{W}) = \prod_{l=1}^Lq(\mathbf{W}^l|\mathbf{W}^{1:l-1})
\end{equation}where the layerwise factors are computed via exact inference between the layerwise prior and a set of variationally-parameterised pseudo-likelihood terms. We adopt a similar approach.

\subsection{Amortising Layerwise Inference}\label{sec:amortised-inference}
We generalise the likelihood of the last-layer weights seen in \Cref{eqn:last_layer} to the weights of the $l$-th layer as follows
\begin{equation}
    p(\mathbf{Y}^l|\mathbf{X}^{l-1}, \mathbf{W}^l) = \prod_{d=1}^{d_l}\prod_{n=1}^N\underbrace{\mathcal{N}\Bigl(y^l_{n,d}; {\phi(\mathbf{x}_n^{l-1})}^\top\mathbf{w}_d^l, {\sigma_{n,d}^l}^2\Bigr)}_{t_{n}(\mathbf{w}_d^l)}
\end{equation}where $y_{n,d}^l$ is a pseudo-observation corresponding to the $d$-th activation of the $l$-th layer for the $n$-th datapoint, and $\sigma_{n,d}^l$ is the noise level for the corresponding pseudo-likelihood term. Note that $\mathbf{X}^0\equiv \mathbf{X}$. These two parameters are obtained by passing the $n$-th input-output pair through an inference network $g_{\theta_l}$:
\begin{equation}
    \mathbf{y}_{n}^l, \: \log(\boldsymbol{\sigma}_n^l) = g_{\theta_l}(\mathbf{x}_n, \mathbf{y}_n)
\end{equation}where $\theta_l$ denotes the parameters of the $l$-th layer's inference network, which will be optimised during training\footnote{Another option is to pass the concatenation of an input-output pair \textit{as well as} the previous layer's activation $\mathbf{X}_n^{l-1}$ into each inference network. It is possible this could allow for smaller inference networks.}. In the case of the final layer we can use the actual observations, that is, $\mathbf{Y}^L\equiv\mathbf{Y}$. With the pseudo-likelihood parameters, and assuming unitwise-factorised Gaussian priors $p_{\psi_l}(\mathbf{W}^l) = \prod_{d=1}^{d_l}\mathcal{N}\left(\mathbf{w}_d^l; \boldsymbol{\mu}_d^l, \boldsymbol{\Sigma}_d^l\right)$ where $\psi_l=\{\boldsymbol{\mu}_d^l, \boldsymbol{\Sigma}_d^l\}_{d=1}^{d_l}$, the approximate layerwise posteriors are computed in closed form:
\begin{align}\label{eqn:q}
    q(\mathbf{W}^l|\mathbf{W}^{1:l-1},\mathcal{D}) &= p(\mathbf{W}^l|\mathbf{X}^{l-1}, \mathbf{Y}^l) \propto \prod_{d=1}^{d_l}\mathcal{N}\left(\mathbf{w}_d^l; \boldsymbol{\mu}_d^l, \boldsymbol{\Sigma}_d^l\right)\mathcal{N}\left(\mathbf{y}_d^l;\phi(\mathbf{X}^{l-1})^\top\mathbf{w}_d^l, {\boldsymbol{\Lambda}_d^l}^{-1}\right) \notag \\
    &= \prod_{d=1}^{d_l}\mathcal{N}\left(\mathbf{w}_d^l;\mathbf{m}_d^l, \mathbf{S}_d^l\right)
\end{align}where $\boldsymbol{\Lambda}_d^l\in\mathbb{R}^{N\times N}$ is a diagonal precision matrix with the $n$-th diagonal element given by $\frac{1}{\left({\sigma_{n,d}^l}\right)^2}$, and the posterior mean vectors $\mathbf{m}_d^l$ and covariance matrices $\mathbf{S}_d^l$ are given by
\begin{align}
    &{\mathbf{S}_d^l}^{-1} = {\boldsymbol{\Sigma}_d^l}^{-1} + {\phi(\mathbf{X}^{l-1})}^\top\boldsymbol{\Lambda}_d^l\phi(\mathbf{X}^{l-1}) \\
    &\boldsymbol{m}_d^l = \mathbf{S}_d^l\left({\boldsymbol{\Sigma}_d^l}^{-1}\boldsymbol{\mu}_d^l + {\phi(\mathbf{X}^{l-1})}^\top\boldsymbol{\Lambda}_d^l\mathbf{y}_d^l\right).
\end{align}We consider inference under Gaussian priors with different factorisation structures in \Cref{apd:priors}.

This machinery, which we call the \textit{amortised linear layer}, enables inference over the weights of a linear layer that is situated arbitrarily within a neural network architecture, conditional on the weights of the previous layers. By stacking these layers and sampling from each layer's conditional posterior before computing the next layer's posterior, we can perform amortised inference in a BNN via the variational posterior 
\begin{align}
    q\left(\mathbf{W}|\mathcal{D}\right) &= \prod_{l=1}^Lq\left(\mathbf{W}^l|\mathbf{W}^{1:l-1},\mathcal{D}\right) \\
    &= \prod_{l=1}^Lp\left(\mathbf{W}^l|\mathbf{X}^{l-1}, \mathbf{Y}^l\right).
\end{align}Since such a model may be interpreted as a latent-variable neural process \citep{garnelo2018neural} in which the latent variable is the set of BNN weights and the decoder is the BNN itself, we refer to this model as the \textit{Bayesian neural network process} (BNNP). See \Cref{fig:compdiags} for computational diagrams of the amortised linear layer and a BNNP.

\begin{figure}[t]
  {
    \subfigure[Amortised linear layer]{\label{fig:amortised-linear-layer}%
      \includegraphics[height=0.23\linewidth]{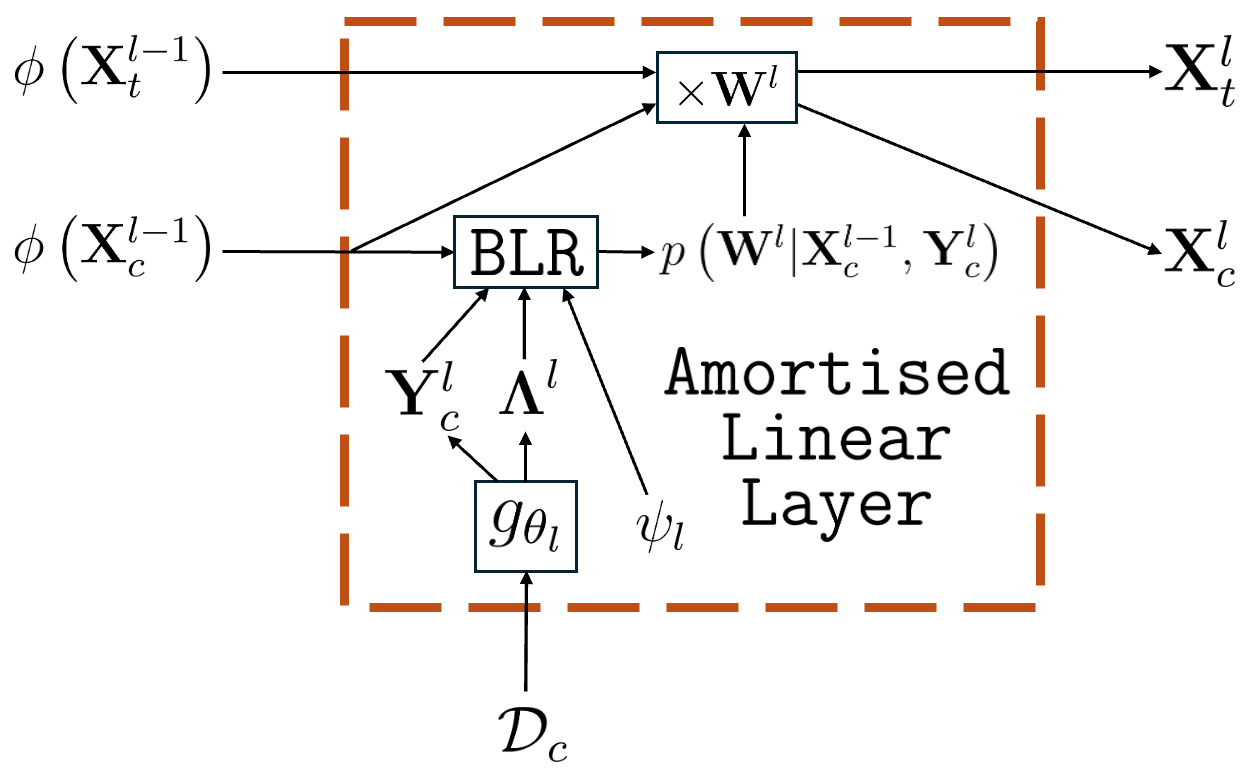}}%
    \quad \quad 
    \subfigure[BNNP]{\label{fig:bnnp-diag}%
      \includegraphics[height=0.23\linewidth]{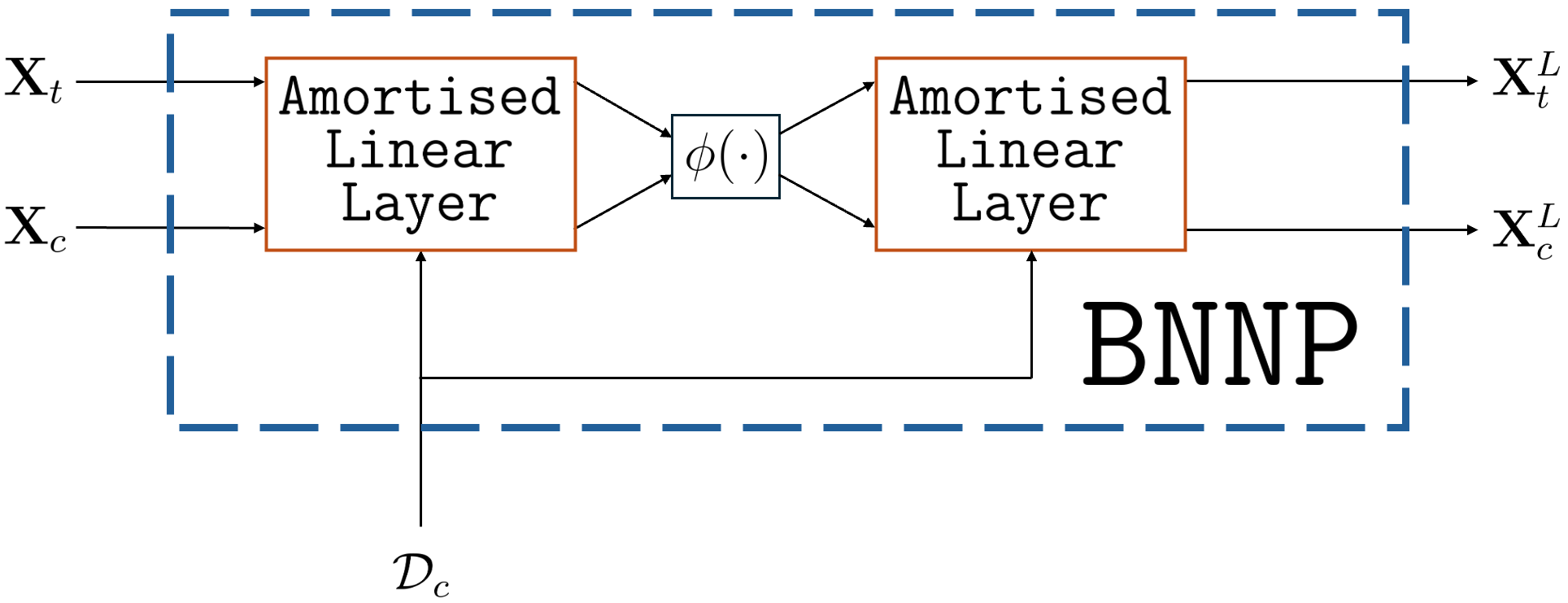}}
  }\vspace{-2.5mm}
  {\caption{Computational diagrams of (a) the amortised linear layer, and (b) \edit{a BNNP with one hidden layer of activations}. We use the context $\cdot_c$ and target $\cdot_t$ notation to distinguish between inputs with labels, on which we condition, and inputs without labels, at which we predict.\label{fig:compdiags}}}\vspace{-2.5mm}
\end{figure}

\subsection{Training}
We adopt a variational approach to training, where the parameters to be optimised are the inference network parameters $\Theta=\{\theta_l\}_{l=1}^L$ and the prior parameters $\Psi=\{\psi_l\}_{l=1}^L$. The former take the role of variational parameters while the latter are model parameters. We motivate our choice of training objective by considering what behaviour we would like to encourage in the BNNP. We summarise these considerations into the following three desiderata: 
\begin{enumerate}[label=(\Roman*)]
    \item accurate approximate posteriors,
    \item a prior that faithfully encodes the data-generating process,
    \item high quality predictions.
\end{enumerate}
Assuming access to a meta-dataset $\Xi=\{\mathcal{D}^{(j)}\}_{j=1}^{|\Xi|}$ of tasks that share a data-generating process $\mathcal{D}^{(j)}\sim p(\mathcal{D})$, and partitioning datasets into disjoint context $\cdot_c$ and target $\cdot_t$ sets, we propose a novel objective function for training the BNNP. We call it the posterior-predictive amortised variational inference (PP-AVI) objective, and it is defined as
\begin{equation}
    \mathcal{L}_{\text{PP-AVI}}(\Xi) \vcentcolon=\frac{1}{|\Xi|}\sum_{j=1}^{|\Xi|}\log q\left(\mathbf{Y}_t^{(j)}|\mathcal{D}_c^{(j)}, \mathbf{X}_t^{(j)}\right) + \mathcal{L}_{\text{ELBO}}\left(\mathcal{D}_c^{(j)}\right)
\end{equation}where the first term is the log posterior-predictive density of the target set, and $\mathcal{L}_\text{ELBO}(\mathcal{D})$ denotes the usual evidence lower bound (ELBO) used for VI in BNNs. \edit{The first term in isolation is the usual objective for NPs, reflecting that NP users often only care about predictions} \citep{foong2020meta}. \edit{Since we intend to use the BNNP to study BNNs, we introduce the ELBO term to explicitly encourage high-quality approximate inference.}  \vspace{2.5mm}

\begin{proposition}\label{prop:pp-avi}
    For $|\Xi|\to\infty$, maximisation of $\mathcal{L}_\text{PP-AVI}(\Xi)$ directly targets the three desiderata.
\end{proposition}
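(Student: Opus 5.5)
The plan is to pass to the population limit and split the objective into interpretable pieces. As $|\Xi|\to\infty$, by the law of large numbers the empirical average converges to an expectation over the data-generating process:
\begin{equation*}
\mathcal{L}_{\text{PP-AVI}}\to \mathbb{E}_{p(\mathcal{D})}\Bigl[\log q(\mathbf{Y}_t|\mathcal{D}_c,\mathbf{X}_t)\Bigr] + \mathbb{E}_{p(\mathcal{D})}\Bigl[\mathcal{L}_{\text{ELBO}}(\mathcal{D}_c)\Bigr].
\end{equation*}
We then treat each term separately and show that each desideratum appears as an explicit divergence being minimised.

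\textbf{ELBO term.} We use the standard identity
\begin{equation*}
\mathcal{L}_{\text{ELBO}}(\mathcal{D}_c)=\log p_\Psi(\mathbf{Y}_c|\mathbf{X}_c)-\mathrm{KL}\bigl(q_\Theta(\mathbf{W}|\mathcal{D}_c)\,\|\,p_\Psi(\mathbf{W}|\mathcal{D}_c)\bigr).
\end{equation*}
The KL term, averaged over $p(\mathcal{D})$, is minimised over $\Theta$. This gives desideratum (I): approximate posteriors close to the exact posteriors under the current prior.

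For the evidence term, we add and subtract the entropy of the true conditional $p(\mathbf{Y}_c|\mathbf{X}_c)$. This yields
\begin{equation*}
\mathbb{E}_{p(\mathcal{D})}\bigl[\log p_\Psi(\mathbf{Y}_c|\mathbf{X}_c)\bigr]=-\mathbb{E}_{p(\mathbf{X}_c)}\,\mathrm{KL}\bigl(p(\mathbf{Y}_c|\mathbf{X}_c)\,\|\,p_\Psi(\mathbf{Y}_c|\mathbf{X}_c)\bigr)+\text{const}.
\end{equation*}
Maximising over $\Psi$ therefore drives the prior predictive of the BNN towards the true data-generating process. Since this holds for context sets of arbitrary size and location, it forces the induced stochastic process prior to match $p(\mathcal{D})$ in its finite-dimensional marginals, which is desideratum (II).

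\textbf{Predictive term.} Similarly,
\begin{equation*}
\mathbb{E}\bigl[\log q(\mathbf{Y}_t|\mathcal{D}_c,\mathbf{X}_t)\bigr]=-\mathbb{E}\,\mathrm{KL}\bigl(p(\mathbf{Y}_t|\mathcal{D}_c,\mathbf{X}_t)\,\|\,q(\mathbf{Y}_t|\mathcal{D}_c,\mathbf{X}_t)\bigr)+\text{const}.
\end{equation*}
This is the usual neural-process argument of \citet{foong2020meta}. It shows that the first term targets the true posterior predictive, i.e.\ desideratum (III).

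The main obstacle is to make precise what "directly targets" means, because the two terms share parameters and their optima need not coincide in a restricted family. I would argue at the level of objectives: each desideratum corresponds to a nonnegative divergence that appears with a negative sign in the limiting objective. In the well-specified case, where some $\Psi^\star$ gives $p_{\Psi^\star}(\mathcal{D})=p(\mathcal{D})$ and some $\Theta^\star$ gives exact posteriors, all three divergences vanish simultaneously. This point is then a global maximiser of the limit objective, so the terms are not in conflict. I would also check that the constants (the true entropies) are independent of $\Theta$ and $\Psi$, and that the law of large numbers applies under an integrability assumption on the log densities, which I will state as an assumption rather than prove.
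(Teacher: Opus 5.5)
Your proposal is correct and follows essentially the same route as the paper. Both pass to the population limit, rewrite the ELBO as $\log p_\Psi(\mathbf{Y}_c|\mathbf{X}_c)$ minus the posterior KL, and add and subtract the true conditional entropies, so that the limiting objective equals a $\{\Theta,\Psi\}$-independent constant minus the sum of the three expected KL divergences defining the desiderata; your extra remarks (the integrability assumption for the limit and the well-specified case where all three divergences vanish together) go slightly beyond what the paper states but do not change the argument.
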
 A proof of \Cref{prop:pp-avi} \edit{involving formal definitions of our desiderata}, a practical guide to the PP-AVI objective, and a discussion relating it to other objectives in the literature are provided in \Cref{apd:obj}. We emphasise here, though, that $\mathcal{L}_\text{PP-AVI}(\Xi)$ can be unbiasedly estimated from a minibatch of tasks $\xi\subset\Xi$ via $\mathcal{L}_\text{PP-AVI}(\xi)$, enabling training on huge meta-datasets via stochastic optimisation.

\subsection{Within-Task Minibatching via Sequential Bayesian Inference}\label{sec:minibatch}
When making predictions on a particular task, the full set of real and pseudo observations must be stored in memory in order to compute the posterior. So, while stochastic optimisation allows us to scale to large \textit{meta}-datasets, scalability to large \textit{context} sets remains elusive. Fortunately however, we provide a solution to this problem too. The high memory requirements can be avoided by iteratively updating each layer's posterior with a minibatch of datapoints via sequential Bayesian inference \citep{bishop2007}, temporarily discarding each minibatch after applying its update to a particular layer's conditional posterior.

We partition a task $\mathcal{D}$ into $B$ minibatches $\{\mathcal{D}_b\}_{b=1}^B$. We use $\mathbf{W}^l_{q_b}$ to denote a sample from the $l$-th layer's posterior given only minibatch $b$, that is, $\mathbf{W}^l_{q_b}\sim q(\mathbf{W}^l|\mathbf{W}^{1:l-1}, \mathcal{D}_b)$. Since computation of each layer's posterior requires samples from the \textit{full-batch} previous layer posteriors (i.e., $\mathbf{W}^{1:l-1}_{q_{1:B}}$), the minibatching procedure must be performed in full for each layer before sampling and proceeding to the next layer. The ability to minibatch a forward pass over a given context set is rare in the context of neural processes, and it is a valuable property as it allows us to scale to tasks with large and high-dimensional datasets without running into memory limitations. \edit{Although some extra stochasticity is incurred when training,} at prediction time our minibatching procedure results in the exact same approximate posterior as the full-batch procedure. See \Cref{apd:minibatch} for a detailed algorithmic description of the procedure and further discussion including complexity analysis of the BNNP. In \Cref{apd:online} we use a similar sequential inference trick to devise an online-learning scheme for the BNNP through which predictions for a given task can be updated in light of new data.

\subsection{Adjusting the Flexibility of the Prior}
One limitation shared amongst neural processes is their poor performance when the number of observed tasks is limited \citep{rochussen2025sparse}. This happens because the model parameters are responsible for both amortising (predictive) inference \textit{and} encoding a prior, such that there is no way to discourage the learned prior from overfitting without also affecting the model's ability to amortise prediction. In the BNNP however, these roles are disentangled into two distinct parameter groups; those of the weights prior, and those of the inference networks. This separation enables us to limit the flexibility of the learned prior \textit{without} affecting inference. We do this by fixing the prior over a subset of the weights and optimising only the remaining prior parameters. For example, we can fix the prior over the last layer's weights to a zero-centered and unit-variance diagonal Gaussian while optimising the prior parameters for the earlier-layer weights. By varying the number of weights whose prior is fixed, we introduce a new knob with which to tune the flexibility of the learnable prior. Such a scheme enables practitioners to balance between forcing a broad/misspecified prior and learning an overfit prior, leading to better generalisation in the small meta-dataset regime.

\subsection{Attention}
One way to extend the BNNP would be to go beyond MLPs and incorporate a more sophisticated neural architecture, such as the attention layer \citep{vaswani2017attention}. We outline two ways to do this. As is common in the NP literature, the encoder can be augmented with attention blocks \citep{kim2019attentive, nguyen2022transformer}. In the BNNP, this means parameterising the inference networks $\{g_{\theta_l}\}_{l=1}^L$ as transformers rather than MLPs, and processing the full context set together at each layer. This could improve performance since modelling interactions \textit{between} context points could lead to better pseudo-likelihood parameter estimates than when we process each point independently. However, an attentive encoder incurs an extra computational cost of $\mathcal{O}(n_c^2)$. Furthermore, application of our within-task minibatching scheme under an attentive encoder would lead to different posteriors to the full-batch forward pass due to the new dependencies between context points. Nonetheless, we refer to this variant of the model as the attentive BNNP (AttBNNP). 

Alternatively, we can focus on the decoder. The core technology that we introduce is a way to amortise inference over the weights of a linear layer situated arbitrarily within a neural network. Since an attention block is just a composition of linear layers and various nonparametric operations\footnote{Such as residual connections. We do not consider inference over the \texttt{layer-norm} parameters, justifying a deterministic treatment of them through their comparative inability to overfit.}, it is possible to amortise inference in an attention block by amortising the inference in each linear layer therein. If such amortised attention blocks are stacked, we end up with a transformer whose weights posterior is obtained in-context. A more sophisticated decoder could enable us to model more complex tasks. However, such a decoder processes the target locations together such that the prediction for a particular target depends on the other target locations. This property destroys the consistency of the model and therefore no longer results in a valid stochastic process. We therefore refer to this version of the model as the \textit{Bayesian neural attentive machine} (BNAM), omitting any reference to stochastic processes from its name. Note that the BNAM also incurs an extra $\mathcal{O}(n_t^2)$ computational cost. For computational diagrams as well as further details on the lack of consistency of the BNAM including a simple demo, see \Cref{apd:bnam}.

\section{Related Work} 

\textbf{Neural Processes.}
The BNNP is a new member of the NP family \citep{garnelo2018conditional, dubois2020npf}, and in particular the latent-variable NP family \citep{garnelo2018neural, singh2019sequential, lee2020bootstrapping, foong2020meta}. The AttBNNP is also a member of the transformer NP family \citep{kim2019attentive, nguyen2022transformer, feng2023latent, ashman2024translation, ashman2024in-context}. Our model is different to existing latent-variable NPs in that our latent variable is the parameterisation of the decoder itself, and not an abstract representation of the context set that is \textit{passed to} the decoder. While \cite{rochussen2025sparse} adopt a similar approach, in this work the decoder is a BNN rather than a sparse GP. Inference in the BNNP can be viewed as an instantiation of \cite{volpp2021bayesian}'s Bayesian context aggregation mechanism---in both settings we have an encoder that maps from individual context points to pseudo-likelihood terms with which the latent variable's posterior is obtained through exact inference. While we choose to exclude the BNAM from the NP family due to its lack of consistency, some existing NP variants also lack consistency; \citet{bruinsma2023autoregressive}'s NP variants and \citet{nguyen2022transformer}'s TNP-A both fail to produce consistent predictive distributions due to autoregression amongst targets, while \citet{nguyen2022transformer}'s TNP-ND is more similar to the BNAM, with attention performed between targets\footnote{In their case, attention is performed between the target tokens only for the predictive covariance module. The predictive means for each target location remain conditionally independent given the context set.}. Our training and modelling setup is also closely related to that of \citet{gordon2018metalearning}, with the first term of our proposed training objective being exactly equivalent to theirs (see \Cref{apd:obj}).

\textbf{Approximate Inference in BNNs.} There have been considerable efforts to develop more accurate and scalable approximate inference methods for BNNs in recent decades. There are Markov-chain Monte-Carlo algorithms \citep{neal1992bayesian, welling2011bayesian, sommer2024connecting}, ensemble and particle approaches \citep{lakshminarayanan2017simple, dangelo2021repulsive, liu2016stein}, Laplace approximations \citep{mackay1992practical, ritter2018a, immer2021improving}, variational strategies \citep{hinton1993keeping, graves2011practical, louizos2017multiplicative}, as well as more standalone solutions \citep{maddox2019asimple, gal2016dropout, hernandez-lobato2015probabilistic}. Our approach is most similar to \citet{ober2021global}'s global inducing point variational posterior since we factorise the variational posterior into layerwise conditionals and compute each one through exact inference under pseudo-observations. Unlike their approach, in each layer we have a pseudo-likelihood corresponding to \textit{every} datapoint rather than for a limited set of inducing locations; we parameterise the pseudo-likelihood terms via inference networks rather than directly; and we use a broader class of Gaussian priors. Another related method is \citet{kurle2024bali}'s BALI, which adopts a non-variational approach to parameterising the pseudo-likelihood terms in each layer. Separately, our work shares the use of secondary/inference/``\textit{hyper}'' networks with normalising flow-based approaches \citep{louizos2017multiplicative} and Bayesian hypernetworks \citep{krueger2018bayesian}, but the difference is in what the secondary networks map between---for us it is from observations to pseudo-likelihood terms and for them it is from base distribution samples to posterior samples. Finally, our approximate inference scheme is an instance of structured variational inference \citep{hoffman2015structured}.

\textbf{Priors in BNNs.} In general, there is no widely accepted way to select well-specified priors in BNNs (see \citealp{fortuin2022priors}). While (meta-)learning priors is not a new concept \citep{Rasmussen2006Gaussian, patacchiola2020bayesian, fortuin2020meta, rothfuss2021meta}, it is a relatively under-explored approach in the context of Bayesian deep learning. To this end, \citet{fortuin2022bayesian} analyse the empirical distributions of trained neural network weights to construct new priors, \citet{shwartz-ziv2022pretrain} use the approximate posterior from a large pre-training task as a learned prior for downstream fine-tuning tasks, and \citet{villecroze2025last} adopt an empirical variational Bayesian approach to learning a prior over the last-layer weights of a BNN. \citet{ravi2018amortized} and \citet{rothfuss2021meta} consider meta-learning BNN priors, with the latter even doing so via AVI. However, they both do so within a hierarchical Bayesian framework involving restrictive hyper-priors. Concurrently with this work, \citet{park2026uncertaintyaware} also devise a scheme for meta-learning BNN priors, but their weights posterior is obtained in-context via a network linearisation strategy rather than amortisation.  They also propose a mixture of priors---our results suggest this is unnecessary.

\section{Empirical Investigation}

\subsection{How good is the BNNP's approximate posterior?} We begin by evaluating the quality of approximate inference in the BNNP. We adopt a similar setup to \citet{bui2021biases} and, under a common BNN architecture with fixed hyperparameters (including $\Psi$), we measure the gap between the log marginal likelihood (LML) and the ELBOs achieved by various VI techniques. The difference between these two quantities represents the KL divergence $\text{KL}\bigl[q(\mathbf{W}|\mathcal{D})\mathrel{\|}p_\Psi(\mathbf{W}|\mathcal{D})\bigr]$, giving us a a clear metric of approximation quality in each case. To eliminate any bias in inference quality due to misspecified priors, we generate a dataset by 1.) sampling a function from the BNN's prior, 2.) uniformly sampling a set of inputs, and 3.) adding Gaussian noise with standard deviation $0.1$ to the outputs. Since the data therefore has a high probability of being generated under the prior (because it was), we can estimate the log marginal likelihood via Monte Carlo integration. We compare a BNNP meta-trained across similarly generated datasets under $\mathcal{L}_\text{PP-AVI}$, a BNNP trained on just the evaluation task via the standard ELBO objective function, mean-field VI (MFVI; \citealp{blundell2015weight}), \citet{ober2021global}'s global inducing-point VI (GIVI), as well as a number of VI algorithms with increasingly high-rank Gaussian variational posteriors: unitwise correlated (UCVI), layerwise-correlated (LCVI), and fully-correlated (FCVI). The results, which are collected across a range of likelihood noise settings and averaged over four repeat runs, are shown in \Cref{fig:elbo}. They demonstrate that the BNNP \edit{is capable of high quality approximate inference}. For all methods we see that approximate inference quality decreases with smaller likelihood noises. This effect is more pronounced  for the unstructured variational approximations (MFVI, *CVI), so with these methods we observe the same bias in model selection via the ELBO as \citet{bui2021biases}---overly large $\sigma_y$'s are favoured (the true noise level is $10^{-1}$).

\begin{figure}[h]
  \centering {\vspace{-2.5mm}
    \subfigure[$\mathcal{L}_\text{ELBO}(\mathcal{D})$]{\label{fig:elbo:elbo}%
      \includegraphics[width=0.49\linewidth]{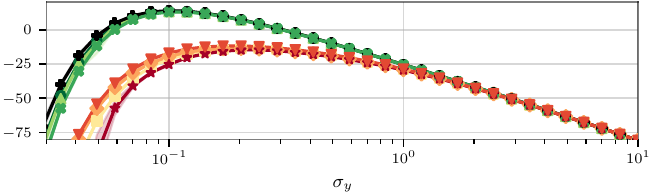}}%
      \hfill
    \subfigure[\ensuremath{\text{KL}\bigl[q(\mathbf{W}|\mathcal{D})\mathrel{\|}p_\Psi(\mathbf{W}|\mathcal{D})\bigr]}]{\label{fig:elbo:kl}%
      \includegraphics[width=0.49\linewidth]{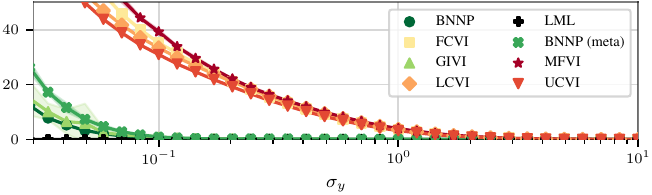}}
  }\vspace{-2.5mm}
  {\caption{\footnotesize ELBO and KL divergence between approximate and true posteriors for different VI methods. The BNNP well-approximates the true posterior.\label{fig:elbo}}}\vspace{-2.5mm}
\end{figure}

\subsection{Do meaningful BNN priors exist?}
Here we train BNNPs (including $\Psi$) on meta-datasets with different data-generating processes. We then qualitatively compare functions sampled from the BNNP's learned prior with those from the true data-generating process. We consider synthetic meta-regression datasets generated from random sawtooth functions, Heaviside (/binary) functions, functions from a standard BNN prior\footnote{Throughout our empirical investigation, we use \textit{standard BNN prior} to refer to zero-centered fully-factorised Gaussian priors with variance scaled inversely proportional to layer width.}, and synthetic ECG signals, as well as the MNIST dataset \citep{lecun1989backpropagation} with random pixel masking cast as a pixelwise meta-regression dataset \citep{garnelo2018conditional}, in which case we use the AttnBNNP. The results are visualised in \Cref{fig:synthetic-prior,fig:mnist}.

\begin{figure}[h] \vspace{-2.5mm}
    \centering
    % Row 1
    \includegraphics[width=0.16\textwidth]{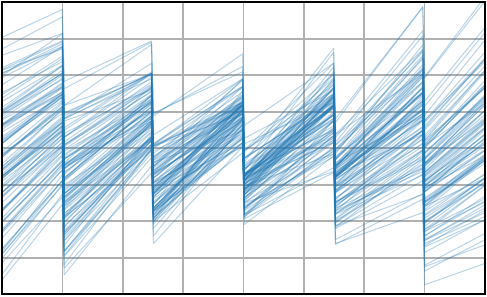}
    \includegraphics[width=0.16\textwidth]{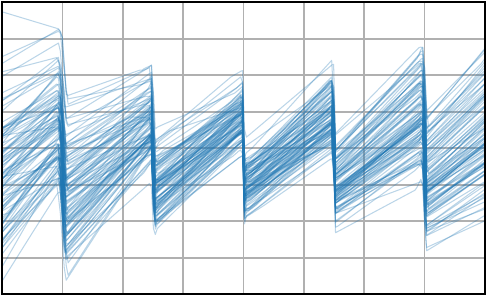}
    \includegraphics[width=0.16\textwidth]{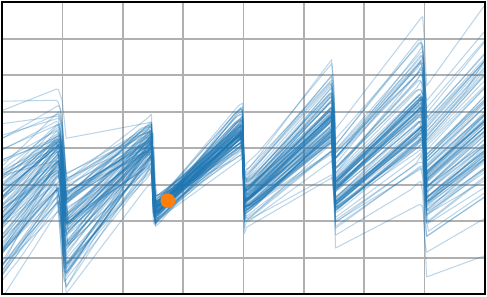}
    \includegraphics[width=0.16\textwidth]{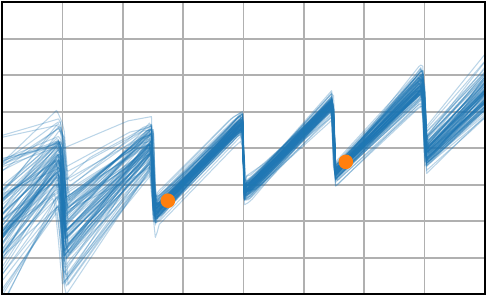}
    \includegraphics[width=0.16\textwidth]{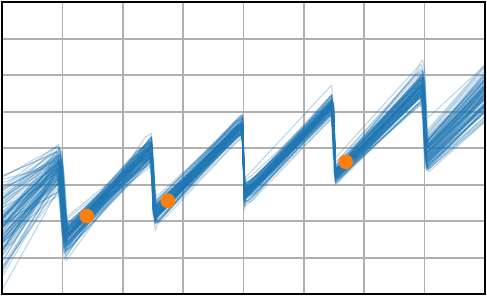}
    \includegraphics[width=0.16\textwidth]{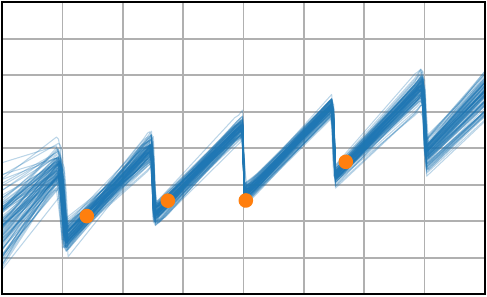}

    % Row 2
    \includegraphics[width=0.16\textwidth]{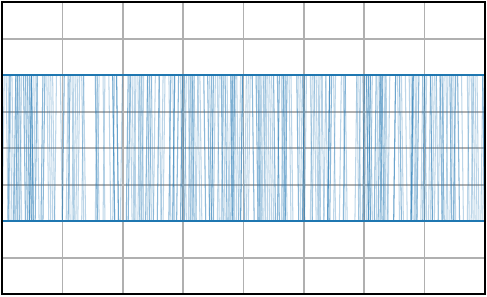}
    \includegraphics[width=0.16\textwidth]{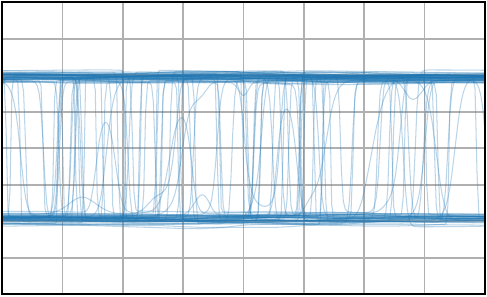}
    \includegraphics[width=0.16\textwidth]{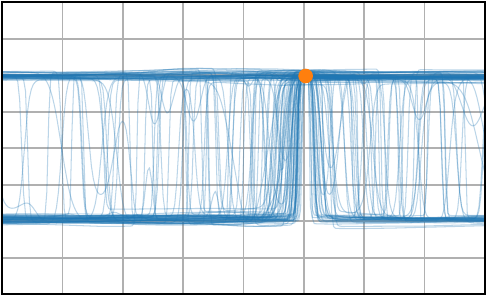}
    \includegraphics[width=0.16\textwidth]{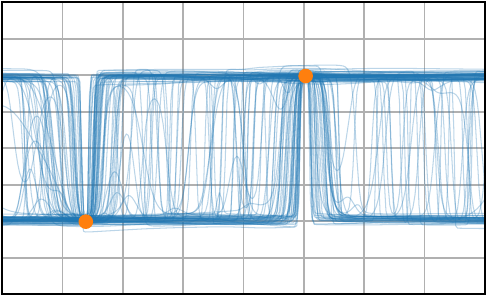}
    \includegraphics[width=0.16\textwidth]{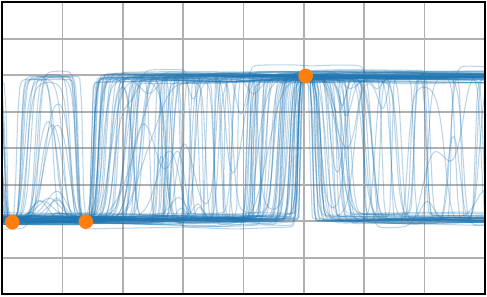}
    \includegraphics[width=0.16\textwidth]{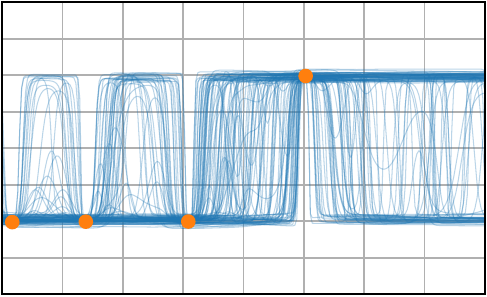}

    % Row 3
    \includegraphics[width=0.16\textwidth]{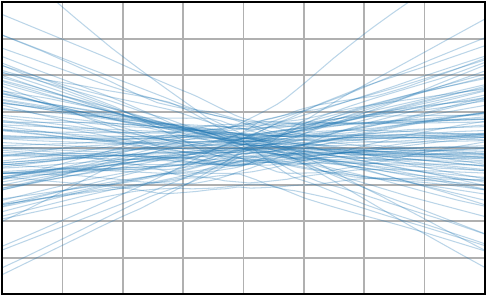}
    \includegraphics[width=0.16\textwidth]{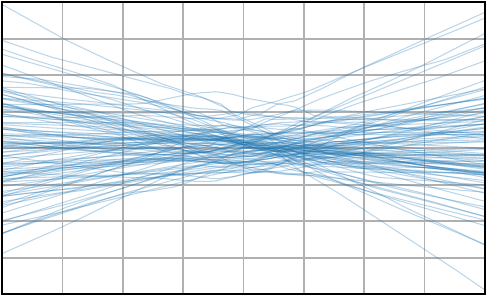}
    \includegraphics[width=0.16\textwidth]{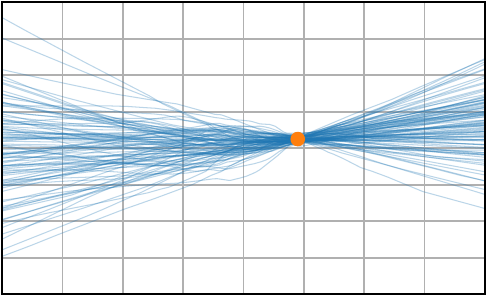}
    \includegraphics[width=0.16\textwidth]{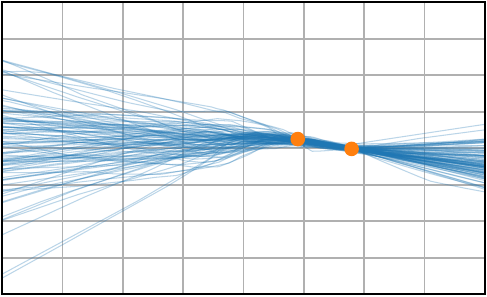}
    \includegraphics[width=0.16\textwidth]{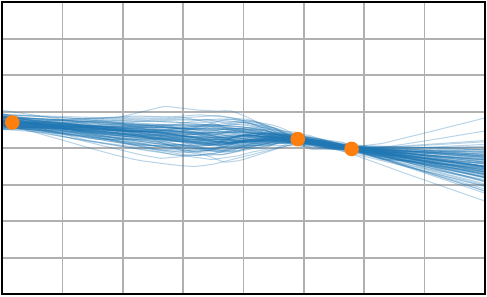}
    \includegraphics[width=0.16\textwidth]{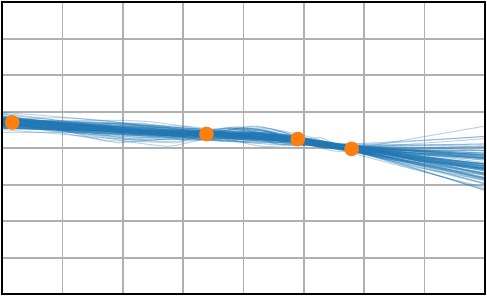}

    % Row 4 (ECG)
    \includegraphics[width=0.16\textwidth]{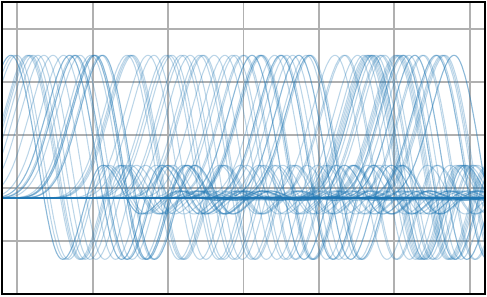}
    \includegraphics[width=0.16\textwidth]{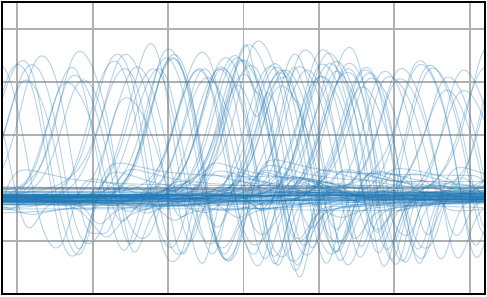}
    \includegraphics[width=0.16\textwidth]{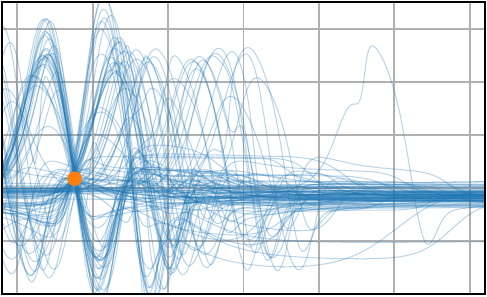}
    \includegraphics[width=0.16\textwidth]{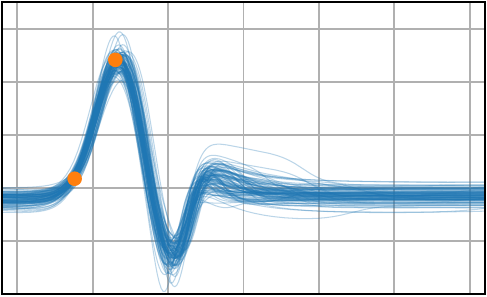}
    \includegraphics[width=0.16\textwidth]{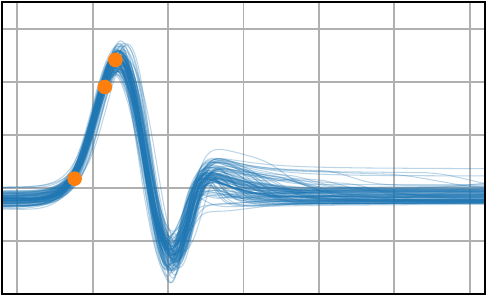}
    \includegraphics[width=0.16\textwidth]{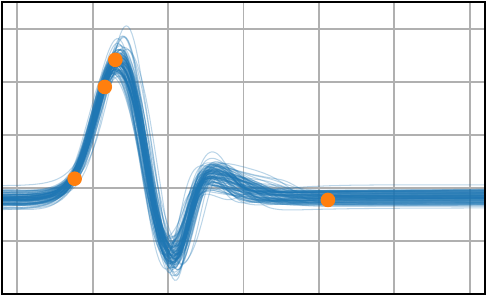}

    \caption{\footnotesize{Function samples from the true data-generating process (first column), learned BNNP prior predictive samples (second column), BNNP posterior predictive function samples (remaining columns).}}
    \label{fig:synthetic-prior} \vspace{-2.5mm}
\end{figure}

We observe that the BNNP learns functional priors that are almost indistinguishable from the synthetic data-generating processes. The corresponding posterior predictive samples appear to be sensible as well, with increased uncertainty away from observations while maintaining the underlying functional structure in each case. While the AttnBNNP's ability to generate MNIST images might not represent the state-of-the-art, the fact that many of the samples are clearly recognisable digits further demonstrates this section's conclusions, which are that 1) BNN priors that faithfully encode complex data-generating processes \textit{do} exist, and 2) the BNNP can be used to find such priors.

\begin{figure}[h]
    \centering
    % Left block: 8 small images (2 rows of 4)
    \begin{minipage}[h]{0.392\textwidth}
        \centering
        % Row 1
        \includegraphics[width=0.23\textwidth]{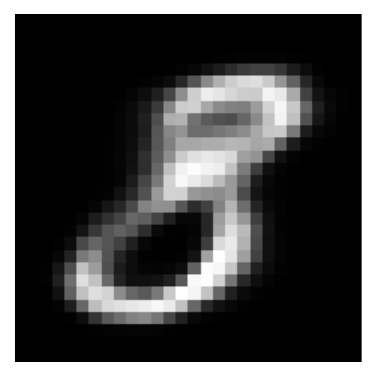}
        \includegraphics[width=0.23\textwidth]{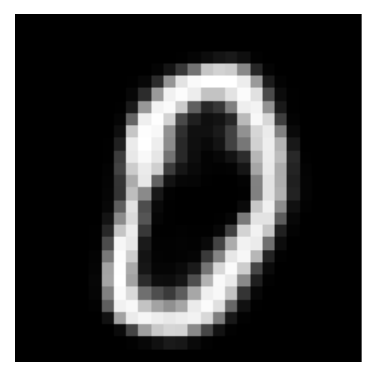}
        \includegraphics[width=0.23\textwidth]{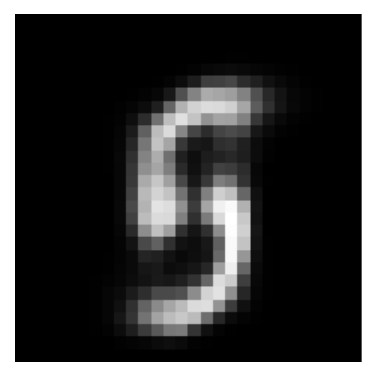}
        \includegraphics[width=0.23\textwidth]{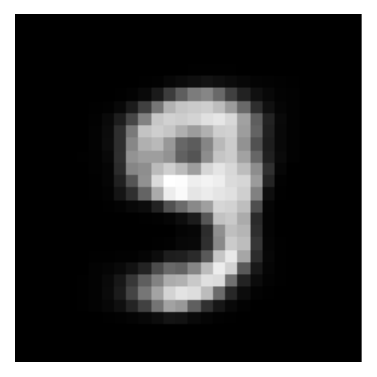} \\[1.5pt]
        % Row 2
        \includegraphics[width=0.23\textwidth]{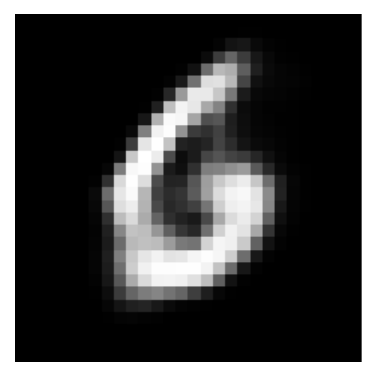}
        \includegraphics[width=0.23\textwidth]{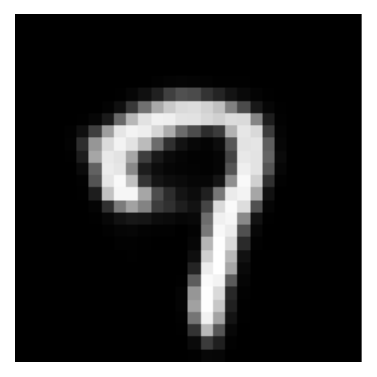}
        \includegraphics[width=0.23\textwidth]{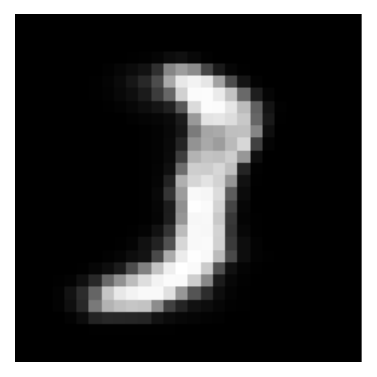}
        \includegraphics[width=0.23\textwidth]{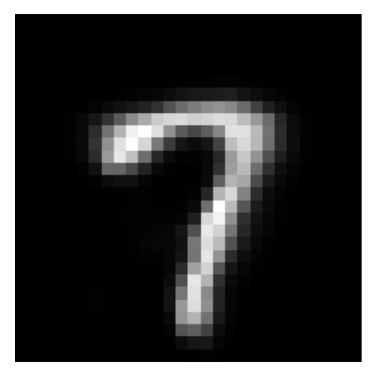}
    \end{minipage}%
    \hfill
    % Right block: 2 large images side by side
    \begin{minipage}[h]{0.608\textwidth}
        \centering
        \includegraphics[width=0.32\textwidth]{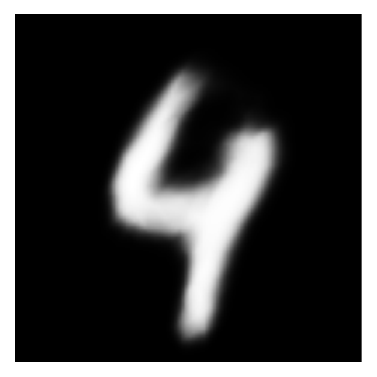}
        \includegraphics[width=0.32\textwidth]{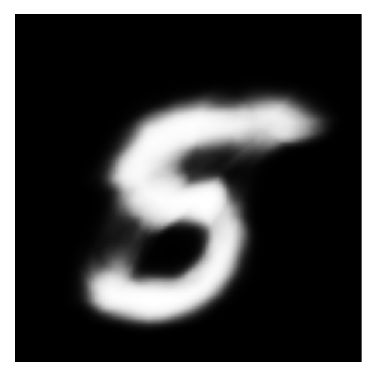}
        \includegraphics[width=0.32\textwidth]{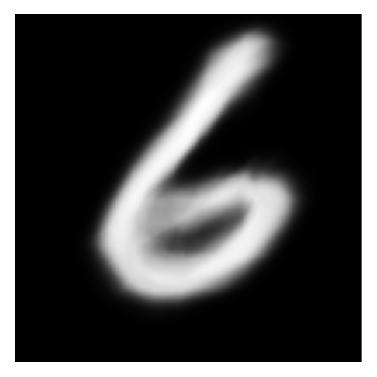}
    \end{minipage}\vspace{-2.5mm}

    \caption{\footnotesize{Generative modelling of MNIST digits with the AttnBNNP. Smaller images depict sampled prior functions evaluated at the same 28$\times$28 grid of inputs that the training data lay on. Larger ones depict sampled functions queried at a 100$\times$100 gid of inputs. The AttnBNNP's prior has encoded the functional behaviour of handwritten digits, so super-resolution is natively supported without needing further training.}}
    \label{fig:mnist}\vspace{-5mm}
\end{figure}

\subsection{Does a good prior double as a lunch voucher?}
To investigate whether the quality of approximate inference still matters under a good prior, we consider one real-world and three synthetic data-generating processes. In each case, we meta-train a BNNP in order to find a well-specified BNN prior for the problem. For a variety of approximate inference algorithms, we compare predictive performance achieved under a standard prior versus a well-specified one. This is repeated over 16 test datasets and we use per-datapoint log posterior predictive density (LPPD) and mean absolute error (MAE) as metrics. We consider SWAG \citep{maddox2019asimple}, MFVI, Langevin Monte Carlo (LMC; \citealp{rossky1978brownian}), GIVI, the BNNP, and Hamiltonian Monte Carlo (HMC; \citealp{neal1992bayesian}). For our real-world setting, we substitute LMC and HMC for their stochastic-gradient counterparts (SGLD; \citealp{welling2011bayesian}, SGHMC; \citealp{chen2014stochastic}) due to larger context sets and architectures.

\begin{figure}[h]
  \centering {\vspace{-5mm}
    \subfigure[Context set]{%
      \includegraphics[width=0.32\linewidth]{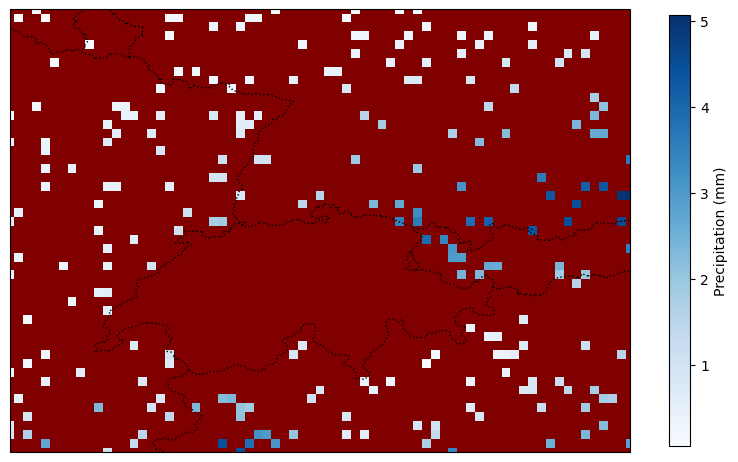}}%
      \hfill
    \subfigure[Predictive mean]{%
      \includegraphics[width=0.32\linewidth]{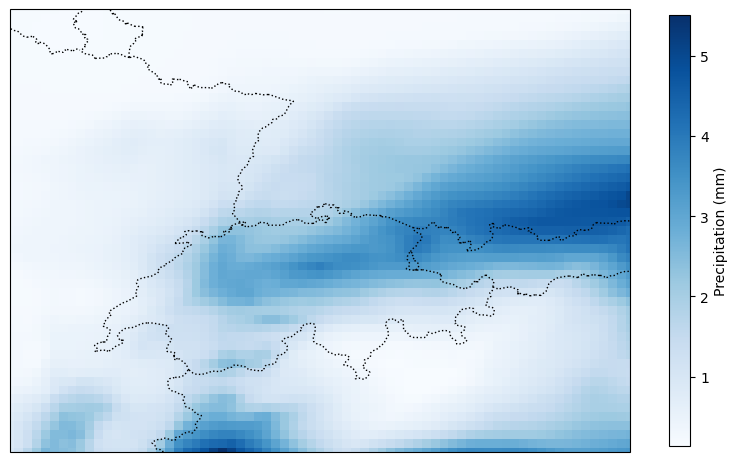}}
      \hfill
    \subfigure[Predictive uncertainty]{%
      \includegraphics[width=0.32\linewidth]{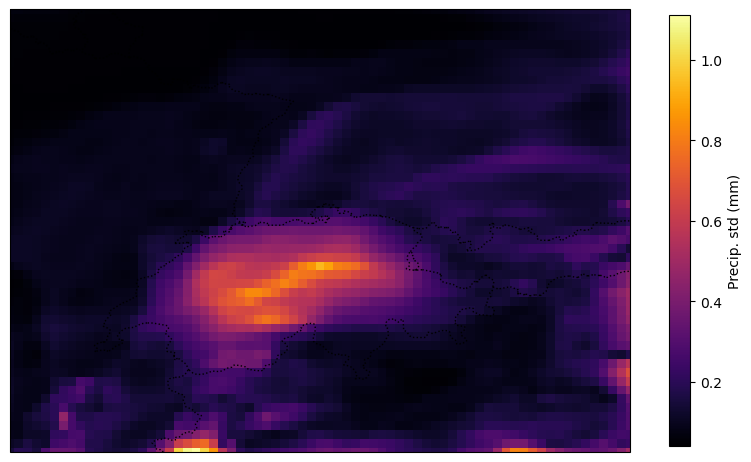}}
  }\vspace{-4mm}
  {\caption{\footnotesize{Demonstration of an ERA5 precipitation prediction test task with the BNNP. With no context points over Switzerland, the BNNP's predictive uncertainty is increased in that region.\label{fig:era5}}}}\vspace{-3mm}
\end{figure}

As synthetic data-generating processes, we use squared-exponential GP, Heaviside, and sawtooth functions. In all three cases the inputs are sampled uniformly and the outputs are lightly noised. The real-world setting we consider is precipitation prediction over an area of Europe centered on Switzerland. We use the ERA5 Land dataset \citep{munoz2021era5} and use longitude, latitude, and temperature as input variables. To make the prediction task even more challenging, we omit any context points from Switzerland in the test tasks, meaning predictions in this region are heavily influenced by the choice of prior. \Cref{fig:era5} demonstrates the ERA5 test task setup and \Cref{fig:prior-transfer} displays the results across the four settings. While it is clear that better priors boost predictive performance, we also see that there remains considerable variation in performance amongst the methods when under learned priors. In other words; a good prior is \textit{not} all you need when working with BNNs since high quality approximate inference is still necessary---there is no free lunch in Bayesian deep learning.

\begin{figure}[h]\vspace{-2.5mm}
    \centering \tiny
    \hspace{0.9cm} GP \hspace{2.5cm}
    Heaviside \hspace{2.5cm}
    Sawtooth \hspace{2.6cm}
    ERA5\\[0.5em]
    
    % Row 1
    \rotatebox{90}{\hspace{0.8cm}LPPD}~ \rotatebox{90}{\hspace{0.95cm}\rotatebox{270}{($\uparrow$)}}
    \includegraphics[width=0.235\textwidth]{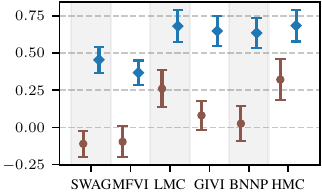}
    \includegraphics[width=0.235\textwidth]{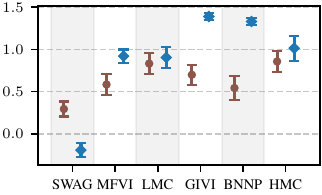}
    \includegraphics[width=0.235\textwidth]{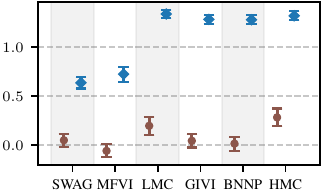}
    \includegraphics[width=0.235\textwidth]{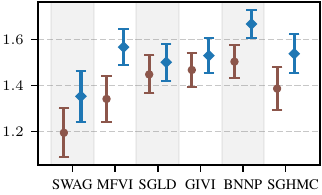}

    % Row 2
    \rotatebox{90}{\hspace{0.8cm}MAE}~ \rotatebox{90}{\hspace{0.95cm}\rotatebox{270}{($\downarrow$)}}
    \includegraphics[width=0.235\textwidth]{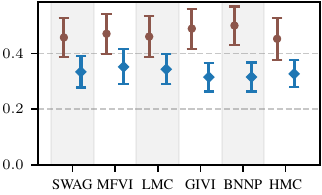}
    \includegraphics[width=0.235\textwidth]{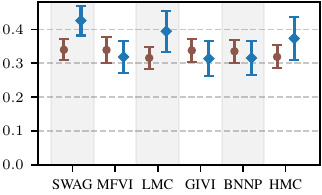}
    \includegraphics[width=0.235\textwidth]{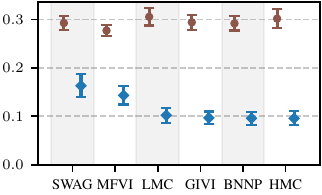}
    \includegraphics[width=0.235\textwidth]{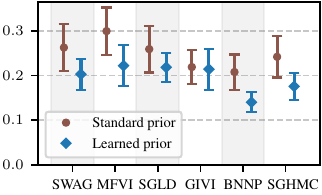}\vspace{-2.5mm}

    \caption{\footnotesize{Target-set performance of approximate inference algorithms under a well-specified prior (blue) and a standard BNN prior (brown). The learned prior almost always leads to improved performance.}}
    \label{fig:prior-transfer}\vspace{-5mm}
\end{figure}

\subsection{Can a restricted prior improve meta-level data efficiency?}
We consider two problem settings in which the number of available datasets would typically be seen as too few for NPs to be applied. The first setting is a recasting of the Abalone age prediction task \citep{nash1994abalone} as meta-regression. The three classes corresponding to a specimen's sex (male/female/infant) are used to separate the data into three distinct datasets. The male and female datasets are used for meta-training and the infant dataset is reserved for testing. There are seven input features including various specimen size and weight measurements. The second setting is based on the Paul15 single-cell RNA sequencing dataset \citep{paul2015transcriptional}, and the task is to predict how specialised a cell is from 3451 gene expressions. We perform PCA on the data to obtain 100 information-rich abstract features, and split the dataset into 19 subsets according to cell clusterings that \citet{paul2015transcriptional} provide. We randomly select ten for meta-training and one for testing.

\begin{figure}[h]
  \centering {\vspace{-4mm} \tiny
  \rotatebox{90}{\hspace{1.0cm}LPPD}~ \rotatebox{90}{\hspace{1.15cm}\rotatebox{270}{($\uparrow$)}}
    \subfigure[Abalone]{%
      \includegraphics[width=0.45\linewidth]{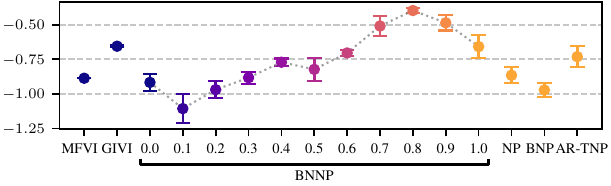}}%
      \hfill
      \rotatebox{90}{\hspace{1.0cm}LPPD}~ \rotatebox{90}{\hspace{1.15cm}\rotatebox{270}{($\uparrow$)}}
    \subfigure[Paul15]{%
      \includegraphics[width=0.45\linewidth]{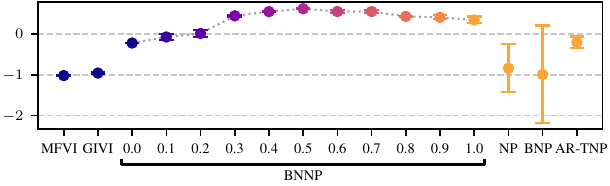}}
  }\vspace{-2.5mm}
  {\caption{\footnotesize Test-task target-set performance for two meta-learning problems with limited data. Decimals indicate the proportion of the BNNP's prior that is trained. BNNPs with partially trainable priors perform the best.  \label{fig:goldilocks}}}\vspace{-2.5mm}
\end{figure}

At the bottom end of the ``prior-learnability'' spectrum, we consider MFVI and GIVI baselines with standard priors (trained on the test-task's context set), while at the other, as comparable NPs which produce coherent function samples, we consider the original (latent-variable) NP \citep{garnelo2018neural}, the Bayesian NP (BNP; \citealp{volpp2021bayesian}), and a transformer NP \citep{nguyen2022transformer} under \citet{bruinsma2023autoregressive}'s state-of-the-art autoregressive sampling scheme (AR-TNP). We compare the baselines to the BNNP across its full range of prior-learnability where all non-learnable prior parameters are fixed to standard BNN prior settings, and prior parameters corresponding to earlier layer weights are made trainable first. The results are averaged over four trials. In \Cref{fig:goldilocks} we see in both problem settings that the best-performing model is a BNNP with a \textit{partially} learnable prior (0.8 and 0.5 respectively). In the particularly data-scarce Abalone problem, we see clear evidence of overfitting in the fully-flexible NPs, including the BNNP-1.0.

\section{Discussion, Limitations, and the Bigger Picture}
Why is the BNNP's approximate posterior \edit{as good as it is?} Each layer's approximate posterior is explicitly conditioned on the weights of the previous layers, enabling us to model weight correlations \textit{between} layers. While it is tempting to attribute the high quality to this as \cite{ober2021global} do, FCVI's approximate posterior---which also models such inter-layer correlations---is very poor. One explanation could be that our method is unaffected by the posterior multimodality induced by weight-space symmetries. This is because each layer is given access to the outputs of the previous layer as well as its own pseudo-outputs, meaning each layer only ``sees'' one mode of the posterior.

While it may be surprising that simple Gaussian priors can yield highly complex, even multimodal (Heaviside in \Cref{fig:synthetic-prior}, MNIST digits in \Cref{fig:mnist}) stochastic process priors, we note that the universal posterior predictive approximation result for mean-field Gaussian posteriors of \citet{farquhar2020liberty} likely also applies to Gaussian priors. We are encouraged by the demonstrated flexibility of Gaussian priors as it means finding good BNN priors is not as insurmountable a task as it might have been. The BNNP provides a solution to this problem for the case when multiple datasets are available, but the problem remains unsolved for the single-dataset case.

We highlight that our message is \textit{not} that the BNNP is the one model to rule them all. When there is a limited number of datasets, it is a very useful NP for practitioners to have in their inventory. Otherwise, our main focus in this work was in using it as a scientific tool with which to study BNNs. Inference in the BNNP scales unfavourably with architecture width (\Cref{apd:minibatch}), so we leave investigation of the BNNP's performance as a general-purpose NP to future work. Similarly, we introduce the BNAM for scientific interest but we do not include it in our experiments since it is of no use in answering our particular research questions.

Finally, we note that the strong performance of our explicitly Bayesian meta-learning setup hints at an underlying lesson; that Bayesians should be using the abundant data of today's world to learn powerful \textit{priors}, not to be squeezing posteriors out of dubious priors. This message seems to be coming from an ever-growing chorus, with the posing of Bayesian inference as an in-context learning problem becoming increasingly popular \citep{muller2025position, reuter2025can, change2025amortized}.

\subsubsection*{Acknowledgements}
TR is grateful to Adrian Weller and Rich Turner for helpful comments and Matt Ashman for day-to-day supervision on an earlier version of this project \citep{ashman2023amortised}. VF was supported by the Branco Weiss Fellowship.

% \clearpage

\bibliographystyle{plainnat}
\bibliography{iclr2026_conference}

\clearpage

\appendix
\crefalias{section}{appendix}

\section{Objective Functions and Training}\label{apd:obj}
In this section we justify our choice of objective function, we discuss practical implementation of it, and we discuss alternatives. Throughout, we use $q$ to denote distributions that depend on the approximate posterior $q(\mathbf{W}|\cdot)$ and $p_\Psi$ to denote distributions that depend on the parameters of the prior \textit{without} depending on  $q(\mathbf{W}|\cdot)$.

\subsection{PP-AVI}\label{subapd:ppavi}
We begin by repeating \Cref{prop:pp-avi} for the reader's benefit.
\begin{proposition*}
    For $|\Xi|\to\infty$, maximisation of $\mathcal{L}_\text{PP-AVI}(\Xi)$ directly targets the three desiderata.
\end{proposition*}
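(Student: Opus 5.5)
The plan is to replace the empirical average over tasks by an expectation, and then split the resulting objective into terms that each correspond to one desideratum. As $|\Xi|\to\infty$, the law of large numbers gives
\begin{equation*}
\mathcal{L}_\text{PP-AVI}(\Xi)\to \mathbb{E}_{p(\mathcal{D})}\Bigl[\log q\bigl(\mathbf{Y}_t|\mathcal{D}_c,\mathbf{X}_t\bigr)\Bigr] + \mathbb{E}_{p(\mathcal{D})}\Bigl[\mathcal{L}_\text{ELBO}(\mathcal{D}_c)\Bigr].
\end{equation*}
Both terms should then be treated as functionals of $(\Theta,\Psi)$. I will first fix formal definitions of the desiderata, since the statement is informal:
\begin{enumerate}[label=(\Roman*)]
\item accurate posteriors means small $\mathbb{E}_{p(\mathcal{D})}\,\text{KL}\bigl[q(\mathbf{W}|\mathcal{D}_c)\,\|\,p_\Psi(\mathbf{W}|\mathcal{D}_c)\bigr]$;
\item a faithful prior means small $\text{KL}\bigl[p(\mathcal{D}_c)\,\|\,p_\Psi(\mathcal{D}_c)\bigr]$, i.e.\ the prior predictive over datasets matches the data-generating process;
\item high-quality predictions means small $\mathbb{E}_{p(\mathcal{D}_c,\mathbf{X}_t)}\,\text{KL}\bigl[p(\mathbf{Y}_t|\mathcal{D}_c,\mathbf{X}_t)\,\|\,q(\mathbf{Y}_t|\mathcal{D}_c,\mathbf{X}_t)\bigr]$.
\end{enumerate}

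For the ELBO term, I will use the standard identity
\begin{equation*}
\mathcal{L}_\text{ELBO}(\mathcal{D}_c)=\log p_\Psi(\mathcal{D}_c)-\text{KL}\bigl[q(\mathbf{W}|\mathcal{D}_c)\,\|\,p_\Psi(\mathbf{W}|\mathcal{D}_c)\bigr].
\end{equation*}
Taking the expectation over $p(\mathcal{D})$ and adding and subtracting the entropy of $p(\mathcal{D}_c)$ gives
\begin{equation*}
\mathbb{E}_{p}[\log p_\Psi(\mathcal{D}_c)]=-\text{KL}\bigl[p(\mathcal{D}_c)\,\|\,p_\Psi(\mathcal{D}_c)\bigr]-\mathrm{H}[p(\mathcal{D}_c)].
\end{equation*}
The entropy term is constant in $(\Theta,\Psi)$. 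So maximising the expected ELBO is equivalent to minimising the sum of the (II) divergence and the (I) divergence, each of which is nonnegative.

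For the predictive term, I write
\begin{equation*}
\mathbb{E}_{p}[\log q(\mathbf{Y}_t|\mathcal{D}_c,\mathbf{X}_t)] = -\mathbb{E}_{p(\mathcal{D}_c,\mathbf{X}_t)}\,\text{KL}\bigl[p(\mathbf{Y}_t|\mathcal{D}_c,\mathbf{X}_t)\,\|\,q(\mathbf{Y}_t|\mathcal{D}_c,\mathbf{X}_t)\bigr] - \mathrm{H}[\mathbf{Y}_t|\mathcal{D}_c,\mathbf{X}_t].
\end{equation*}
Here the conditional entropy is again independent of the parameters, so this term directly targets (III). Summing the two pieces writes the limiting objective as a constant minus a sum of three nonnegative divergences, one per desideratum. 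I will also remark that the global optimum simultaneously zeroes all three when the model is expressive enough, namely when $p_\Psi$ can match $p(\mathcal{D})$ and $q$ equals the exact posterior. In that case the posterior predictive equals the true conditional, so the three goals are mutually consistent rather than competing.

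The main obstacle is definitional and concerns the precise meaning of ``directly targets''. The three divergences share parameters, so in a misspecified model the optimum trades them off rather than minimising each one. I will therefore state the claim as an exact decomposition of the objective into the three divergences plus constants, and separately show compatibility at the optimum in the well-specified case. A secondary technical point is justifying the law-of-large-numbers limit: it needs i.i.d.\ tasks and integrability of the log-densities, which I would assume, and the random context/target partition should be absorbed into $p(\mathcal{D})$.
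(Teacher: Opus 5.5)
Your proposal is correct and follows the paper's proof essentially step for step: take the $|\Xi|\to\infty$ limit as an expectation over $p(\mathcal{D})$, write the ELBO as the log marginal likelihood minus $\text{KL}\bigl[q(\mathbf{W}|\mathcal{D}_c)\,\|\,p_\Psi(\mathbf{W}|\mathcal{D}_c)\bigr]$, and add and subtract the true log-densities so that the limit is a parameter-free constant minus the three expected KL divergences. Your well-specified compatibility remark is a correct extra that the paper does not make.

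The one thing to tighten is (II), because the BNN only models $\mathbf{Y}_c$ given $\mathbf{X}_c$. Your ELBO identity therefore holds only if $p_\Psi(\mathcal{D}_c)$ is read as $p_\Psi(\mathbf{Y}_c|\mathbf{X}_c)$. Under that reading $\text{KL}\bigl[p(\mathcal{D}_c)\,\|\,p_\Psi(\mathcal{D}_c)\bigr]$ compares densities on different spaces and need not be nonnegative. Either state (II) as the paper does, $\mathbb{E}_{p(\mathbf{X}_c)}\text{KL}\bigl[p(\mathbf{Y}_c|\mathbf{X}_c)\,\|\,p_\Psi(\mathbf{Y}_c|\mathbf{X}_c)\bigr]$ with the constant $\mathrm{H}(\mathbf{Y}_c|\mathbf{X}_c)$, or pair the model with $p(\mathbf{X}_c)$ and use the chain rule for KL to reach the same quantity.
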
In order to prove \Cref{prop:pp-avi}, we first provide formal definitions of the three desiderata. Throughout, we assume a true underlying data-generating process $\mathcal{D}\sim p(\mathcal{D})$.
\begin{definition}[accurate approximate posteriors]
    A probabilistic meta-learner produces accurate approximate posteriors if the task-averaged KL divergence between approximate and true posteriors
    \begin{equation}
        \mathbb{E}_{p(\mathcal{D}_c)}\Bigl[\text{KL}\left[q(\mathbf{W}|\mathcal{D}_c)\mathrel{\Vert}p_\Psi(\mathbf{W}|\mathcal{D}_c)\right]\Bigr]
    \end{equation}is small.
\end{definition}
\begin{definition}[faithful prior]
    A probabilistic meta-learner has a prior that faithfully encodes the data-generating process if the task-averaged KL divergence between the true generative process $p(\mathbf{Y}_c|\mathbf{X}_c)$ and the model's marginal likelihood $p_\Psi(\mathbf{Y}_c|\mathbf{X}_c)$,
    \begin{equation}
        \mathbb{E}_{p(\mathbf{X}_c)}\Bigl[\text{KL}\bigl[p(\mathbf{Y}_c|\mathbf{X}_c)\mathrel{\Vert}p_\Psi(\mathbf{Y}_c|\mathbf{X}_c)\bigr]\Bigr]
    \end{equation}is small.
\end{definition}
\begin{definition}[high quality predictions.]
    A probabilistic meta-learner produces high quality predictions if the task-averaged KL divergence between the true conditional generative process $p(\mathbf{Y}_t|\mathcal{D}_c,\mathbf{X}_t)$ and the model's posterior predictive $q(\mathbf{Y}_t|\mathcal{D}_c,\mathbf{X}_t)$,
    \begin{equation}
        \mathbb{E}_{p(\mathcal{D}_c, \mathbf{X}_t)}\Bigl[\text{KL}\bigl[p(\mathbf{Y}_t|\mathcal{D}_c,\mathbf{X}_t)\mathrel{\Vert} q(\mathbf{Y}_t|\mathcal{D}_c, \mathbf{X}_t)\bigr]\Bigr]
    \end{equation}is small.
\end{definition}
\begin{proof} Recall the definition of $\mathcal{L}_\text{PP-AVI}(\Xi)$:
\begin{equation}
    \mathcal{L}_{\text{PP-AVI}}(\Xi) \vcentcolon=\frac{1}{|\Xi|}\sum_{j=1}^{|\Xi|}\log q\left(\mathbf{Y}_t^{(j)}|\mathcal{D}_c^{(j)}, \mathbf{X}_t^{(j)}\right) + \mathcal{L}_{\text{ELBO}}\left(\mathcal{D}_c^{(j)}\right).
\end{equation}Taking the limit of infinite datasets, we proceed as follows
\begin{align}
    \lim_{|\Xi|\to\infty}\mathcal{L}_\text{PP-AVI}(\Xi) &= \mathbb{E}_{p(\mathcal{D})}\Bigl[\log q\left(\mathbf{Y}_t|\mathcal{D}_c, \mathbf{X}_t\right) + \mathcal{L}_{\text{ELBO}}\left(\mathcal{D}_c\right)\Bigr] \\
    &= \mathbb{E}_{p(\mathcal{D})}\Bigl[\log q\left(\mathbf{Y}_t|\mathcal{D}_c, \mathbf{X}_t\right) + \log p_\Psi(\mathbf{Y}_c|\mathbf{X}_c) \notag \\
    &\ \ \ \ \ - \text{KL}[q(\mathbf{W}|\mathcal{D}_c)\mathrel{\Vert}p_\Psi(\mathbf{W}|\mathcal{D}_c)]\Bigr] \\
    &= {\color{NavyBlue}\mathbb{E}_{p(\mathcal{D}_c, \mathbf{X}_t)}\Bigl[\mathbb{E}_{p(\mathbf{Y}_t|\mathcal{D}_c,\mathbf{X}_t)}\left[\log q(\mathbf{Y}_t|\mathcal{D}_c, \mathbf{X}_t)\right]\Bigr]} \notag \\
    &\ \ \ \ \ {\color{Red}+\mathbb{E}_{p(\mathbf{X}_c)}\Bigl[\mathbb{E}_{p(\mathbf{Y}_c|\mathbf{X}_c)}\left[\log p_\Psi(\mathbf{Y}_c|\mathbf{X}_c)\right]\Bigr]} \notag \\
    &\ \ \ \ \ -\mathbb{E}_{p(\mathcal{D}_c)}\Bigl[\text{KL}\left[q(\mathbf{W}|\mathcal{D}_c)\mathrel{\Vert}p_\Psi(\mathbf{W}|\mathcal{D}_c)\right]\Bigr] \\
    &= {\color{NavyBlue}\mathbb{E}_{p(\mathcal{D}_c, \mathbf{X}_t)}\Bigl[\mathbb{E}_{p(\mathbf{Y}_t|\mathcal{D}_c,\mathbf{X}_t)}\left[\log \frac{q(\mathbf{Y}_t|\mathcal{D}_c, \mathbf{X}_t)p(\mathbf{Y}_t|\mathcal{D}_c, \mathbf{X}_t)}{p(\mathbf{Y}_t|\mathcal{D}_c, \mathbf{X}_t)}\right]\Bigr]} \notag \\
    &\ \ \ \ \ {\color{Red}+\mathbb{E}_{p(\mathbf{X}_c)}\Bigl[\mathbb{E}_{p(\mathbf{Y}_c|\mathbf{X}_c)}\left[\log \frac{p_\Psi(\mathbf{Y}_c|\mathbf{X}_c)p\mathbf{Y}_c|\mathbf{X}_c)}{p(\mathbf{Y}_c|\mathbf{X}_c)}\right]\Bigr]} \notag \\
    &\ \ \ \ \ -\mathbb{E}_{p(\mathcal{D}_c)}\Bigl[\text{KL}\left[q(\mathbf{W}|\mathcal{D}_c)\mathrel{\Vert}p_\Psi(\mathbf{W}|\mathcal{D}_c)\right]\Bigr] \\
    &= {\color{NavyBlue}\underbrace{\mathbb{H}(\mathbf{Y}_t|\mathcal{D}_c,\mathbf{X}_t)}_{\text{const.}} - \mathbb{E}_{p(\mathcal{D}_c, \mathbf{X}_t)}\Bigl[\text{KL}\bigl[p(\mathbf{Y}_t|\mathcal{D}_c,\mathbf{X}_t)\mathrel{\Vert} q(\mathbf{Y}_t|\mathcal{D}_c, \mathbf{X}_t)\bigr]\Bigr]} \notag \\
    &\ \ \ \ \ 
    {\color{Red}+ \underbrace{\mathbb{H}(\mathbf{Y}_c|\mathbf{X}_c)}_{\text{const.}} - \mathbb{E}_{p(\mathbf{X}_c)}\Bigl[\text{KL}\bigl[p(\mathbf{Y}_c|\mathbf{X}_c)\mathrel{\Vert}p_\Psi(\mathbf{Y}_c|\mathbf{X}_c)\bigr]\Bigr]} \notag \\
    &\ \ \ \ \ -\mathbb{E}_{p(\mathcal{D}_c)}\Bigl[\text{KL}\left[q(\mathbf{W}|\mathcal{D}_c)\mathrel{\Vert}p_\Psi(\mathbf{W}|\mathcal{D}_c)\right]\Bigr]
\end{align}where $\mathbb{H}(\cdot|\cdot)$ denotes the Shannon conditional entropy. Since the entropy terms are constant with respect to the variational and model parameters $\{\Theta, \Psi\}$, we reach
\begin{align}
    \argmax_{\{\Theta,\Psi\}} \lim_{|\Xi|\to\infty}\mathcal{L}_\text{PP-AVI}(\Xi) &\equiv \argmin_{\{\Theta, \Psi\}}\Biggl[ \mathbb{E}_{p(\mathcal{D}_c, \mathbf{X}_t)}\Bigl[\text{KL}\bigl[p(\mathbf{Y}_t|\mathcal{D}_c,\mathbf{X}_t)\mathrel{\Vert} q(\mathbf{Y}_t|\mathcal{D}_c, \mathbf{X}_t)\bigr]\Bigr] \notag \\
    &\qquad \ \ \ \ + \mathbb{E}_{p(\mathbf{X}_c)}\Bigl[\text{KL}\bigl[p(\mathbf{Y}_c|\mathbf{X}_c)\mathrel{\Vert}p_\Psi(\mathbf{Y}_c|\mathbf{X}_c)\bigr]\Bigr] \notag \\
    &\qquad \ \ \ \ + \mathbb{E}_{p(\mathcal{D}_c)}\Bigl[\text{KL}\left[q(\mathbf{W}|\mathcal{D}_c)\mathrel{\Vert}p_\Psi(\mathbf{W}|\mathcal{D}_c)\right]\Bigr] \Biggr].
\end{align}
\end{proof}

\paragraph{Practical Details.} Unfortunately, both terms in $\mathcal{L}_\text{PP-AVI}$ are analytically intractable. As a first step, we decompose the ELBO into the usual \textit{reconstruction error / complexity penalisation} form:
\begin{align}
    \mathcal{L}_\text{PP-AVI}(\Xi) &= \frac{1}{|\Xi|}\sum_{j=1}^{|\Xi|}\log q\left(\mathbf{Y}_t^{(j)}|\mathcal{D}_c^{(j)}, \mathbf{X}_t^{(j)}\right) + \mathbb{E}_{q\left(\mathbf{W}|\mathcal{D}_c^{(j)}\right)}\Bigl[\log p\left(\mathbf{Y}_c^{(j)}|\mathbf{W}, \mathbf{X}_c^{(j)}\right)\Bigr] \notag \\
    &\ \ \ \ \ - \text{KL}\Bigl[q\left(\mathbf{W}|\mathcal{D}_c^{(j)}\right)\mathrel{\Vert}p_\Psi\left(\mathbf{W}\right)\Bigr].
\end{align}We deal with these three terms in order of increasing difficulty. The final term becomes tractable by decomposing it into the sum of layerwise KL divergences
\begin{equation}
    \text{KL}\Bigl[q\left(\mathbf{W}|\mathcal{D}_c^{(j)}\right)\mathrel{\Vert}p_\Psi(\mathbf{W})\Bigr] = \sum_{l=1}^L\text{KL}\Bigl[q\left(\mathbf{W}^l|\mathbf{W}^{1:l-1},\mathcal{D}_c^{(j)}\right)\mathrel{\|} p_{\psi_l}\left(\mathbf{W}^l\right)\Bigr]
\end{equation}where each term in the sum is available in closed form as the KL divergence between two multivariate Gaussians.

The middle term, the (context set) \textit{expected log-likelihood}, is unbiasedly estimable via Monte Carlo integration with a finite number of samples $K$
\begin{equation}
    \mathbb{E}_{q\left(\mathbf{W}|\mathcal{D}_c^{(j)}\right)}\Bigl[\log p\left(\mathbf{Y}_c^{(j)}|\mathbf{W}, \mathbf{X}_c^{(j)}\right)\Bigr] \approx \frac{1}{K}\sum_{k=1}^K\log p\left(\mathbf{Y}_c^{(j)}|\mathbf{W}^{(k)},\mathbf{X}_c^{(j)}\right)
\end{equation}where $\mathbf{W}^{(k)}\sim q\left(\mathbf{W}|\mathcal{D}_c^{(j)}\right)$ and low-variance gradient estimates are available via application of the reparameterisation trick \citep{kingma2014autoencoding}.

The posterior predictive term, which can equivalently be referred to as the (target set) \textit{log expected likelihood}, is defined as 
\begin{equation}
    \log q\left(\mathbf{Y}_t^{(j)}|\mathcal{D}_c^{(j)},\mathbf{X}_t^{(j)}\right) \vcentcolon= \log \int p\left(\mathbf{Y}_t^{(j)}|\mathbf{W},\mathbf{X}_t^{(j)}\right)q\left(\mathbf{W}|\mathcal{D}_c^{(j)}\right)\mathrm{d}\mathbf{W}.
\end{equation}Again turning to Monte Carlo integration with $K$ samples, we estimate this term via 
\begin{align}
    \log q\left(\mathbf{Y}_t^{(j)}|\mathcal{D}_c^{(j)},\mathbf{X}_t^{(j)}\right) &\approx \log \frac{1}{K}\sum_{k=1}^K p\left(\mathbf{Y}_t^{(j)}|\mathbf{W}^{(k)},\mathbf{X}_t^{(j)}\right) \\
    &= \text{LogSumExp}\Biggl(\left\{\log p\left(\mathbf{Y}_t^{(j)}|\mathbf{W}^{(k)},\mathbf{X}_t^{(j)}\right)\right\}_{k=1}^K\Biggr) - \log K
\end{align}where $\mathbf{W}^{(k)}\sim q\left(\mathbf{W}|\mathcal{D}_c^{(j)}\right)$ and we use the log-sum-exp trick to maintain stable computations in log-space. Unfortunately, this Monte Carlo estimate is biased and so we cannot get away with single-sample estimates as we might with standard variational inference. In our experiments we used between 8 and 32 samples.

\subsection{Standard AVI / Meta Empirical Bayes}
Perhaps the most obvious alternative objective function to consider would be the standard AVI objective given by the task-averaged ELBO:
\begin{equation}
    \mathcal{L}_\text{AVI}\left(\Xi\right)\vcentcolon=\frac{1}{|\Xi|}\sum_{j=1}^{|\Xi|}\mathcal{L}_\text{ELBO}\left(\mathcal{D}^{(j)}_c\right).
\end{equation}In the infinite-dataset limit, this objective function corresponds to minimising two of the three desired expected KL divergences; $\mathbb{E}_{p(\mathbf{X}_c)}\Bigl[\text{KL}\bigl[p(\mathbf{Y}_c|\mathbf{X}_c)\mathrel{\Vert}p_\Psi(\mathbf{Y}_c|\mathbf{X}_c)\bigr]\Bigr]$, and $\mathbb{E}_{p(\mathcal{D}_c)}\Bigl[\text{KL}\left[q(\mathbf{W}|\mathcal{D}_c)\mathrel{\Vert}p_\Psi(\mathbf{W}|\mathcal{D}_c)\right]\Bigr]$. Supposing we have globally optimised this objective function for an infinitely flexible model (not necessarily a BNNP) under infinite datasets, the trained model would have perfect approximate posteriors and a prior that models the true (context) data-generating process exactly. Perhaps the missing term would then be minimised for free---maybe we should expect extremely high quality posterior predictives from this resulting model. While this may be true (proper analysis left to future work), in practice we found $\mathcal{L}_\text{AVI}$ to be significantly harder to globally optimise than $\mathcal{L}_\text{PP-AVI}$. When training our BNNPs, we initialised the parameters of the prior to those of the standard isotropic Gaussian prior. For some tasks (particularly the heaviside and sawtooth function regression problems) it seems that this was a particularly adversarial initialisation in that maximisation of $\mathcal{L}_\text{AVI}$ seemed not to be able to ``break free'' of the predictive behaviour induced by the standard BNN prior. So, even if the extra expected KL term from the PP-AVI objective is not strictly necessary, it seems to simplify the loss-landscape by suppressing the non-globally optimal modes via penalising miscalibrated posterior predictives.

Since the ELBO is a lower-bound to the log marginal likelihood $\mathcal{L}_\text{ELBO}\left(\mathcal{D}_c\right) \leq \log p(\mathbf{Y}_c|\mathbf{X}_c)$, under infinite datasets we have that the AVI objective is a lower bound to the \textit{expected} log marginal likelihood across tasks:
\begin{equation}
    \lim_{|\Xi|\to\infty}\mathcal{L}_\text{AVI}(\Xi) \leq \mathbb{E}_{p(\mathbf{X}_c)}\bigl[\log p(\mathbf{Y}_c|\mathbf{X}_c)\bigr].
\end{equation}This then corresponds to a meta-level type-II marginal likelihood scheme, or, meta-level empirical Bayes. Note that this is the objective function used in \citet{rochussen2025sparse} and \citet{ashman2023amortised}.

\subsection{NP Maximum Likelihood / Posterior Predictive Maximisation / ML-PIP}\label{apd:npml}
Neural processes tend to be used as probabilistic predictors more than anything else (such as generative models), so the only distribution users really care about is the posterior predictive $q(\mathbf{Y}_t|\mathcal{D}_c, \mathbf{X}_t)$. Note that while this distribution is referred to as the posterior predictive in the context of latent-variable NPs, for the conditional family of NPs it is referred to as the predictive likelihood. The obvious training scheme would then be to maximise the (log) posterior predictive likelihood of the target set over many tasks. This is the de-facto approach in conditional family NPs, and \citet{gordon2018metalearning} demonstrated that it is sufficient for performing amortised variational inference in latent-variable meta-learners too. \citeauthor{gordon2018metalearning} refer to this scheme as \textit{meta-learning probabilistic inference for prediction} (ML-PIP). We refer to this objective as the neural process maximum likelihood (NPML) objective, and it takes the form
\begin{equation}
    \mathcal{L}_\text{NPML}\left(\Xi\right) \vcentcolon= \frac{1}{|\Xi|}\sum_{j=1}^{|\Xi|} \log q(\mathbf{Y}_{t}^{(j)}|\mathcal{D}_c^{(j)}, \mathbf{X}_{t}^{(j)}).
\end{equation}In the infinite-dataset limit, maximising $\mathcal{L}_\text{NPML}$ corresponds to minimising the expected KL term that is missing from $\mathcal{L}_\text{AVI}$. In other words, we have that
\begin{equation}
    \mathcal{L}_\text{PP-AVI}(\Xi) \equiv \mathcal{L}_\text{NPML}(\Xi) + \mathcal{L}_\text{AVI}(\Xi).
\end{equation}While NPML has been demonstrated to work well in NPs\footnote{Note that, for latent-variable type NPs, the latent variable is denoted by $\mathbf{z}$ rather than $\mathbf{W}$.}, with the BNNP we found it would lead either to solutions that modelled the data-generating process very badly, or to high numerical instability causing training runs to fail. As with $\mathcal{L}_\text{AVI}$, we suspect this is because of loss-landscape multimodality and difficulty ensuring global optimisation. In particular, we suspect that the globally optimal modes in $\mathcal{L}_\text{AVI}$ and $\mathcal{L}_\text{NPML}$ correspond to each other but that the secondary modes in each loss-landscape do not, meaning their sum is more straightforward to globally optimise.

\subsection{Neural Process Variational Inference}
Another way to train latent-variable NPs is through a more conventional variational approach, where the objective function is given by a lower bound to the conditional marginal likelihood $p_\Psi(\mathbf{Y}_t|\mathcal{D}_c, \mathbf{X}_t)$ averaged over tasks. We refer to this as the neural process variational inference (NPVI) objective, and it is defined as
\begin{align}
    \mathcal{L}_\text{NPVI}(\Xi) &\vcentcolon=\frac{1}{|\Xi|}\sum_{j=1}^{|\Xi|}\mathbb{E}_{q\left(\mathbf{W}|\mathcal{D}^{(j)}\right)}\Bigl[\log p\left(\mathbf{Y}_t^{(j)}|\mathbf{W},\mathbf{X}_t^{(j)}\right)\Bigr] \notag \\
    &\ \ \ \ \ - \text{KL}\Bigl[q\left(\mathbf{W}|\mathcal{D}^{(j)}\right)\mathrel{\|}p_\Psi\left(\mathbf{W}|\mathcal{D}_c^{(j)}\right)\Bigr].
\end{align}While NPVI is sensible in the infinite-dataset limit, being equivalent to minimising 
\begin{equation}\label{eqn:npvi}
    \mathbb{E}_{p(\mathcal{D}_c,\mathbf{X}_t)}\Bigl[\text{KL}\bigl[p(\mathbf{Y}_t|\mathcal{D}_c,\mathbf{X}_t)\mathrel{\|}p_\Psi(\mathbf{Y}_t|\mathcal{D}_c,\mathbf{X}_t)\bigr]\Bigr] + \mathbb{E}_{p(\mathcal{D})}\Bigl[\text{KL}\bigl[q(\mathbf{W}|\mathcal{D})\mathrel{\|}p_\Psi(\mathbf{W}|\mathcal{D})\bigr]\Bigr],
\end{equation}it is intractable due to the presence of the true posterior $p_\Psi(\mathbf{W}|\mathcal{D}_c)$. While this is typically circumvented by approximating the true posterior with the approximate posterior $q(\mathbf{W}|\mathcal{D}_c)$, NPVI only targets \Cref{eqn:npvi} if the approximate posterior is exact. Under a poor approximate posterior, such as at the beginning of training, the loss-landscape is then quite different to what it should be and it is unclear how close the implemented version of $\mathcal{L}_\text{NPVI}$ ever becomes to the ideal one. Indeed, the NPVI approach is known to yield suboptimal predictive performance \citep{tuananh2018empirical}. Furthermore, the KL term involves computing the two approximate posteriors $q(\mathbf{W}|\mathcal{D})$ and $q(\mathbf{W}|\mathcal{D}_c)$. Since the computational bottleneck associated with the BNNP is in computing the approximate posterior, the double approximate posterior computation renders $\mathcal{L}_\text{NPVI}$ even more inappropriate for the BNNP, and so we did not attempt to use it at all. Note that $\mathcal{L}_\text{NPVI}$ was the objective function used in the original (latent-variable) neural process \citep{garnelo2018neural}.

\subsection{TELL-AVI}
Motivated to find an objective function that can be ubiasedly estimated and which has similar infinite-dataset behaviour to $\mathcal{L}_\text{PP-AVI}$, we introduce the target-expected-log-likelihood (TELL) AVI objective. We define it as 
\begin{equation}
    \mathcal{L}_\text{TELL-AVI}(\Xi)\vcentcolon=\frac{1}{|\Xi|}\sum_{j=1}^{|\Xi|}\mathbb{E}_{q\left(\mathbf{W}|\mathcal{D}_c^{(j)}\right)}\Bigl[\log p\left(\mathbf{Y}_t^{(j)}|\mathbf{W},\mathbf{X}_t^{(j)}\right)\Bigr] + \mathcal{L}_\text{ELBO}\left(\mathcal{D}_c^{(j)}\right)
\end{equation}where we note the only difference between $\mathcal{L}_\text{PP-AVI}$ and $\mathcal{L}_\text{TELL-AVI}$ is in the ordering of the $\log$ and expectation $\mathbb{E}_{q\left(\mathbf{W}|\mathcal{D}_c^{(j)}\right)}$ in the left-hand term\footnote{i.e., $\mathcal{L}_\text{PP-AVI}$ might have equivalently been called the target-log-expected-likelihood (TLEL) AVI objective.}. The right-hand term is estimated exactly as in $\mathcal{L}_\text{PP-AVI}$, but the left-hand can be \textit{unbiasedly} estimated by its Monte Carlo estimate from $K$ samples, $\frac{1}{K}\sum_{k=1}^K\log p\left(\mathbf{Y}_t^{(j)}|\mathbf{W}^{(k)},\mathbf{X}_t^{(j)}\right)$ for $\mathbf{W}^{(k)}\sim q\left(\mathbf{W}|\mathcal{D}_c^{(j)}\right)$, enabling the use of very few samples for decreased computational cost.

Seeking to rewrite the expected log-likelihood term in terms of the log posterior predictive (log expected likelihood), we have
\begin{align}
    \mathbb{E}_{q(\mathbf{W}|\mathcal{D}_c)}&\left[\log p\left(\mathbf{Y}_t|\mathbf{W}, \mathbf{X}_t\right)\right] = \mathbb{E}_{q(\mathbf{W}|\mathcal{D}_c)}\left[\log \frac{p\left(\mathbf{Y}_t|\mathbf{W}, \mathbf{X}_t\right)q(\mathbf{W}|\mathcal{D}_c)q\left(\mathbf{Y}_t|\mathcal{D}_c, \mathbf{X}_t\right)}{q(\mathbf{W}|\mathcal{D}_c)q\left(\mathbf{Y}_t|\mathcal{D}_c, \mathbf{X}_t\right)}\right] \\
    &= \mathbb{E}_{q(\mathbf{W}|\mathcal{D}_c)}\left[\log q\left(\mathbf{Y}_t|\mathcal{D}_c, \mathbf{X}_t\right)\right] + \mathbb{E}_{q(\mathbf{W}|\mathcal{D}_c)}\left[\log \frac{q\left(\mathbf{Y}_t, \mathbf{W}|\mathcal{D}_c, \mathbf{X}_t\right)}{q(\mathbf{W}|\mathcal{D}_c)q\left(\mathbf{Y}_t|\mathcal{D}_c, \mathbf{X}_t\right)}\right] \\
    &= \log q\left(\mathbf{Y}_t|\mathcal{D}_c, \mathbf{X}_t\right) + \mathbb{E}_{q(\mathbf{W}|\mathcal{D}_c)}\left[\log \frac{q\left(\mathbf{W}|\mathbf{X}_t, \mathbf{Y}_t, \mathcal{D}_c\right)}{q(\mathbf{W}|\mathcal{D}_c)}\right] \\
    &= \log q\left(\mathbf{Y}_t|\mathcal{D}_c, \mathbf{X}_t\right) -\text{KL}\left[q(\mathbf{W}|\mathcal{D}_c)\mathrel{\Vert}q(\mathbf{W}|\mathcal{D})\right],
\end{align} which gives us the following relationship between $\mathcal{L}_\text{TELL-AVI}$ and $\mathcal{L}_\text{PP-AVI}$:
\begin{equation}
    \lim_{|\Xi|\to\infty}\mathcal{L}_\text{TELL-AVI} = \lim_{|\Xi|\to\infty}\mathcal{L}_\text{PP-AVI} - \mathbb{E}_{p(\mathcal{D})}\Bigl[\text{KL}\bigl[q(\mathbf{W}|\mathcal{D}_c)\mathrel{\|}q(\mathbf{W}|\mathcal{D})\bigr]\Bigr].
\end{equation}

Though this expected KL term is somewhat sensible when interpreted as encouraging approximate posteriors from partial datasets to be similar to their full-data counterparts, the reverse interpretation (that it encourages full-data approximate posteriors to be only as good as their partial-data counterparts) highlights its dubiousness. Furthermore, the extra term would serve as a distractor from the more important ``posteriors KL term'' that $\mathcal{L}_\text{PP-AVI}$ already involves; $\mathbb{E}_{q(\mathbf{W}|\mathcal{D}_c)}\Bigl[\text{KL}\bigl[q(\mathbf{W}|\mathcal{D}_c)\mathrel{\|}p_\Psi(\mathbf{W}|\mathcal{D}_c)\bigr]\Bigr]$, and it is unclear what behaviour the sum of the two KL terms involving $q(\mathbf{W}|\mathcal{D}_c)$ leads to. In preliminary experiments we found that $\mathcal{L}_\text{TELL-AVI}$ led to reasonable performance in terms of predictions and the resulting learned prior, but it was never quite as good as $\mathcal{L}_\text{PP-AVI}$.

\subsection{\edit{Training}}

\begin{algorithm}\footnotesize
    \caption{BNNP training algorithm.}\label{alg:train}
\edit{\begin{algorithmic}
\Require meta-dataset $\Xi$, number of training steps $T$, number of Monte Carlo samples $k$,
\For{$t = 1,\ldots, T$}
\State Randomly select $\xi$ from $\Xi$ \algorithmiccomment{select minibatch of tasks}
\State $\mathcal{L}_\text{batch}\gets0.0$ \algorithmiccomment{initialise batch objective}
\For{dataset $\mathcal{D}$ in minibatch}
    \State compute layerwise posteriors, obtain $k$ samples from each \algorithmiccomment{e.g. by using \cref{alg:batch}}
    \State compute $\mathcal{L}_\text{PP-AVI}(\mathcal{D})$ using $k$ samples \algorithmiccomment{e.g. using practical details from \cref{subapd:ppavi}}
    \State $\mathcal{L}_\text{batch} \gets \mathcal{L}_\text{batch} + \frac{1}{|\xi|}\mathcal{L}_\text{PP-AVI}(\mathcal{D})$
\EndFor
\State Use $-\nabla\mathcal{L}_\text{batch}$ to update $\Theta$ and $\Psi$ \algorithmiccomment{e.g. using Adam, gradient descent, \dots}
\EndFor
\end{algorithmic}}
\end{algorithm}

\clearpage

\section{Inference with Alternative Priors}\label{apd:priors}
\paragraph{Fully factorised.} The simplest form of prior is a fully factorised one $p_{\psi_l}(\mathbf{W}^l) = \prod_{d=1}^{d_l}\prod_{d'=1}^{d_{l-1}}\mathcal{N}\left(w_{d,d'}^l;\mu_{d,d'}^l, {\sigma_{d,d'}^l}^2\right)$. Inference with this prior is performed similarly to the unitwise factorised prior, except that the unitwise covariance matrix is given by $\boldsymbol{\Sigma}_d^l = \text{diag}\left({\boldsymbol{\sigma}_{d}^l}^2\right)$ and can therefore be inverted by taking the reciprocal of the diagonal elements, taking $\mathcal{O}(d_{l-1})$ time rather than $\mathcal{O}({d_{l-1}}^3)$ time.

\paragraph{Layerwise matrix-Gaussian.} While a seemingly obvious prior to consider is the matrix-Gaussian over layerwise weight matrices \citep{kurle2024bali, louizos2016structured, ritter2018a}, this prior turns out not to fit nicely into our amortisation framework. The pseudo-observation noise variances in each layer would have to become part of the prior rather than the pseudo likelihoods, meaning our inference networks would lose their ability to up- or down-weight pseudo-observations by predicting their corresponding noise level. In other words, using matrix-Gaussian priors would force us to destroy the Bayesian context aggregation behaviour of our method.

\paragraph{Layerwise full-rank Gaussian.} A richer prior than the unitwise factorised one is a layerwise full-rank Gaussian defined over $\text{vec}(\mathbf{W}^l)$. To lighten the notation, we define $\boldsymbol{\omega}^l \vcentcolon=\text{vec}(\mathbf{W}^l)$. The prior is then $p_{\psi_l}(\mathbf{W}^l)=p(\boldsymbol{\omega}^l) = \mathcal{N}\left(\boldsymbol{\omega}^l;\boldsymbol{\mu}^l, \boldsymbol{\Sigma}^l\right)$. In order to apply the standard (single-output) Bayesian linear regression results, we need to augment the data matrices $\mathbf{X}^{l-1}, \mathbf{Y}^l$ to some matrix-vector pair $\boldsymbol{\chi}_\phi^{l-1}, \boldsymbol{y}^l$, where $\boldsymbol{y}^l\vcentcolon=\text{vec}(\mathbf{Y}^l)$, so that the likelihood model can be written as $\boldsymbol{y}^l = \boldsymbol{\chi}_\phi^{l-1}\boldsymbol{\omega}^l$. It turns out that the ``vec trick'' \citep{Petersen2008} of the Kronecker product $\otimes$ gives us what we need:
\begin{align}
    \mathbf{Y}^l &= \phi(\mathbf{X}^{l-1})\mathbf{W}^{l} \\
    &\therefore \\
    \boldsymbol{y}^l &= \text{vec}\left(\phi(\mathbf{X}^{l-1})\mathbf{W}^{l}\right) \\
     &= \text{vec}\left(\phi(\mathbf{X}^{l-1})\mathbf{W}^{l}\mathbf{I}^\top\right) \\
     &= \left(\mathbf{I}\otimes\phi(\mathbf{X}^{l-1})\right)\boldsymbol{\omega}^l
\end{align}giving us that $\boldsymbol{\chi}_\phi^{l-1}\vcentcolon=\mathbf{I}\otimes\phi(\mathbf{X}^{l-1})$. Note that $\boldsymbol{\chi}_\phi^{l-1}\in\mathbb{R}^{d_lN\times d_ld_{l-1}}$. The posterior is therefore
\begin{equation}
    p\left(\boldsymbol{\omega}^l|\mathbf{X}^{l-1}, \mathbf{Y}^l\right) = \mathcal{N}\left(\boldsymbol{\omega}^l; \mathbf{m}^l, \mathbf{S}^l\right)
\end{equation}with mean vector and covariance matrix given by
\begin{align}
    &{\mathbf{S}^l}^{-1} = {\boldsymbol{\Sigma}^l}^{-1} + {\boldsymbol{\chi}_\phi^{l-1}}^\top\boldsymbol{\Lambda}^l\boldsymbol{\chi}_\phi^{l-1} \label{eqn:full_rank_posterior_1}\\
    &\mathbf{m}^l = \mathbf{S}^l\left({\boldsymbol{\Sigma}^{l}}^{-1}\boldsymbol{\mu}^l + {\boldsymbol{\chi}_\phi^{l-1}}^\top\boldsymbol{\Lambda}^l\boldsymbol{y}^l\right)\label{eqn:full_rank_posterior_2}
\end{align}where $\boldsymbol{\Lambda}^l\in\mathbb{R}^{Nd_l\times Nd_l}$ is a diagonal precision matrix formed by stacking the $\boldsymbol{\Lambda}_d^l$ matrices along the leading diagonal of an $Nd_l\times Nd_l$ zeroes matrix. More specifically, $\boldsymbol{\Lambda}^l\vcentcolon=\text{diag}\left(\text{concat}\left(\{\boldsymbol{\lambda}_d^l\}_{d=1}^{d_l}\right)\right)$ where the $n$-th element of $\boldsymbol{\lambda}_d^l\in\mathbb{R}^N$ is given by $\frac{1}{{\sigma_{n,d}^l}^2}$. To avoid direct matrix multiplications with the $d_lN\times d_ld_{l-1}$ matrix $\boldsymbol{\chi}_\phi^{l-1}$, \Cref{eqn:full_rank_posterior_1} and \Cref{eqn:full_rank_posterior_2} can be re-written as
\begin{align}
    &{\mathbf{S}^l}^{-1} = {\boldsymbol{\Sigma}^l}^{-1} + \boldsymbol{\Phi}^l \\
    &\mathbf{m}^l = \mathbf{S}^l\left({\boldsymbol{\Sigma}^{l}}^{-1}\boldsymbol{\mu}^l + \boldsymbol{\phi}^l\right)
\end{align}where $\boldsymbol{\Phi}^l$ is constructed by stacking the $d_{l-1}\times d_{l-1}$ matrices $\{{\phi(\mathbf{X}^{l-1})}^\top\boldsymbol{\Lambda}_d^l\phi(\mathbf{X}^{l-1})\}_{d=1}^{d_l}$ along the leading diagonal of a $d_ld_{l-1}\times d_ld_{l-1}$ zeroes matrix, and $\boldsymbol{\phi}^l$ is given by $\text{vec}\left({\phi(\mathbf{X})^{l-1}}^\top\tilde{\mathbf{Y}}^l\right)$ where the elements of $\tilde{\mathbf{Y}}^l$ are given by $\tilde{y}_{n, d}^l \vcentcolon=\frac{y_{n,d}^l}{{\sigma_{n,d}^l}^2}$.

Note that inference with this type of prior is significantly more expensive than with the unitwise or weightwise factorised priors. This is because the computation is dominated by the inversion of $d_ld_{l-1}\times d_ld_{l-1}$ covariance matrices, requiring $\mathcal{O}\left(d_l^3d_{l-1}^3\right)$ operations. For a network with uniform hidden layer width, the cost of inference therefore scales sextically $\left(d^6\right)$ with network width.

\paragraph{Full-rank Gaussian.} The richest prior we consider is a full-rank Gaussian defined over all network weights $\boldsymbol{\omega} \vcentcolon= \text{concat}\left(\{\text{vec}(\mathbf{W}^l)\}_{l=1}^L\right)$, i.e., $p(\mathbf{W}^{1:L}) = p(\boldsymbol{\omega}) = \mathcal{N}\left(\boldsymbol{\omega};\boldsymbol{\mu}, \boldsymbol{\Sigma}\right)$. Inference is then performed in the same way as with the layerwise full-rank Gaussian prior, save that the layerwise factors $\{p_{\psi_l}(\mathbf{W}^l)\}_{l=1}^L$ are replaced with the layerwise conditionals $\{p(\mathbf{W}^l|\mathbf{W}^{1:l-1})\}_{l=1}^L$. Each layerwise conditional takes the form
\begin{equation}
    p(\mathbf{W}^l|\mathbf{W}^{1:l-1}) = \int p(\boldsymbol{\omega}^{l:L}|\boldsymbol{\omega}^{1:l-1})\mathrm{d}\boldsymbol{\omega}^{l+1:L} = \mathcal{N}\left(\boldsymbol{\omega}^l;\boldsymbol{\mu}^{c|p}, \boldsymbol{\Sigma}^{c|p}\right)
\end{equation}where $c$ and $p$ denote the indices of the \textbf{c}urrent and \textbf{p}revious layer weights respectively, and the conditional mean and covariance are given by
\begin{align}
    &\boldsymbol{\mu}^{c|p} = \boldsymbol{\mu}_c + \boldsymbol{\Sigma}_{cp}\boldsymbol{\Sigma}_{pp}^{-1}\left(\boldsymbol{\omega}^{1:l-1} - \boldsymbol{\mu}_{p}\right) \\
    &\boldsymbol{\Sigma}^{c|p} = \boldsymbol{\Sigma}_{cc} - \boldsymbol{\Sigma}_{cp}\boldsymbol{\Sigma}_{pp}^{-1}\boldsymbol{\Sigma}_{pc}
\end{align}where $\boldsymbol{\omega}^a$ denotes the (vectorised) weights of layer $a$, $\boldsymbol{\mu}_{a}$ denotes the subvector of $\boldsymbol{\mu}$ indexed by $a$, and similarly $\boldsymbol{\Sigma}_{ab}$ denotes the submatrix of $\boldsymbol{\Sigma}$ indexed by $a$ and $b$.

\clearpage

\section{Minibatching} \label{apd:minibatch}
Our minibatched posterior sampling algorithm for scalable inference in the BNNP is detailed in \Cref{alg:batch}. The algorithm describes the usual full-batch posterior sampling procedure if the number of minibatches is set to $B=1$. We use \texttt{partial\_forward\_pass}$(\mathbf{X}, \mathbf{W}^{1:l})$ to refer to the propagation of inputs $\mathbf{X}$ forward to the $l$-th layer of the network using weights $\mathbf{W}^{1:l}$, and \texttt{compute\_posterior} to refer to computation of \Cref{eqn:q}.

\begin{algorithm}\footnotesize
    \caption{A minibatched BNNP posterior sampling algorithm.}\label{alg:batch}
\begin{algorithmic}
\Require inference networks $\{g_{\theta_l}\}_{l=1}^L$ and priors $\{p_{\psi_l}(\mathbf{W}^l)\}_{l=1}^L$, 
% \State $y \gets 1$
% \State $X \gets x$
% \State $N \gets n$
\For{$l = 1,\ldots, L$}
\For{$b = 1, \ldots, B$}
    \State Obtain $\mathcal{D}_b$
        \If{$l=1$}
    \State $\mathbf{X}^0_b \gets \mathbf{X}_b$
    \Else
    \State $\mathbf{X}_b^{l-1} \gets$ \texttt{partial\_forward\_pass}$(\mathbf{X}_b, \mathbf{W}^{1:l-1}_{1:B})$
    \EndIf
    \If{$b=1$}
    \State $q(\mathbf{W}^l|\mathbf{W}^{1:l-1}_{1:B}, \mathcal{D}_1) \gets$ \texttt{compute\_posterior}$\left(\mathbf{X}^{l-1}_1, \mathcal{D}_1, g_{\theta_l}, p_{\psi_l}(\mathbf{W}^l)\right)$
    \Else
    \State $q(\mathbf{W}^l|\mathbf{W}^{1:l-1}_{1:B}, \mathcal{D}_{1:b}) \gets$ \texttt{compute\_posterior}$\left(\mathbf{X}_b^{l-1}, \mathcal{D}_b, g_{\theta_l}, q(\mathbf{W}^l|\mathbf{W}^{1:l-1}_{1:B}, \mathcal{D}_{1:b-1})\right)$
    \EndIf
    \State Discard $\mathcal{D}_b$
\EndFor
\State Sample $\mathbf{W}^l_{1:B}\sim q(\mathbf{W}^l|\mathbf{W}^{1:l-1}_{1:B}, \mathcal{D}_{1:B})$
\EndFor
\State \Return $\{\mathbf{W}^l_{1:B}\}_{l=1}^L$
\end{algorithmic}
\end{algorithm}

The time complexity associated with a forward pass through the $l$-th layer is $\mathcal{O}(n_c{d_{l-1}}^3d_l)$, and the corresponding space complexity is $\mathcal{O}(n_c{d_{l-1}}^2d_l)$. The superlinear scaling with the number of input neurons $d_{l-1}$ arises from inverting $d_{l-1}\times d_{l-1}$ matrices. The purpose of our minibatching algorithm is to reduce the linear scaling of the space complexity with context size to constant, i.e. to convert the space complexity to $\mathcal{O}(|b|{d_{l-1}}^2d_l)$, where $|b|$ is the batch size.

While this is greatly beneficial at prediction time, during training it is necessary to store in memory the gradient information corresponding to all batches. This reverts the memory complexity to $\mathcal{O}(n_c{d_{l-1}}^2d_l)$ in spite of the minibatching procedure. To remedy this, we propose to randomly select just one of the minibatches to compute gradients with respect to at every layer (i.e. the same minibatch for all layers), discarding gradient information for the other minibatches. While this leads to noisier gradient update steps, it reduces the memory complexity back down to $\mathcal{O}(|b|{d_{l-1}}^2d_l)$ as desired. Although we leave a detailed analysis of any biases that this might introduce to future work, \edit{we emphasise here that we expect this scheme to maintain a distinct advantage over a naive minibatching in which we simply limit the maximum context set size to the batch size---our scheme should preserve the BNNP's ability to generalise predictive inference to context set sizes \textit{larger than the batch size}}.

\clearpage
\section{Online Learning via Last-Layer Sequential Bayesian Inference}\label{apd:online}
As mentioned in \Cref{sec:minibatch}, any form of posterior update due to new data must be applied in full to each layer's posterior before computing the next layer's conditional posterior. Unfortunately, this means that sequential Bayesian inference cannot be used na\"ively for online learning in the BNNP. To see why, consider a partioning of a task's dataset into ``original'' and ``update'' subsets $\mathcal{D} = \mathcal{D}_o \cup \mathcal{D}_u$. Assume we have already computed the layerwise posteriors for the original data $\{q(\mathbf{W}^l|\mathbf{W}_o^{1:l-1},\mathcal{D}_o)\}_{l=1}^L$, and have since discarded the original data. Without access to the original data we can only obtain the conditional posteriors $\{q(\mathbf{W}^l|\mathbf{W}_{o}^{1:l-1}, \mathcal{D})\}_{l=1}^L$, and not what we need, which is $\{q(\mathbf{W}^l|\mathbf{W}_{o,u}^{1:l-1}, \mathcal{D})\}_{l=1}^L$. This is because conditioning on samples $\mathbf{W}_{o,u}^{1:l-1}$ requires propagating the full collection of data.

However, we can approximate what we need by updating just the last layer's posterior while preserving the existing previous layer posterior samples, giving us 
\begin{align}
    q(\mathbf{W}|\mathcal{D}) &\approx q(\mathbf{W}^L|\mathbf{W}_{o}^{1:l-1}, \mathcal{D})\prod_{l=1}^{L-1}q(\mathbf{W}^l|\mathbf{W}^{1:l-1}_{o}, \mathcal{D}_o) \\
    &\propto p\left(\mathbf{Y}_u^L|\mathbf{W}^L,\mathbf{X}^{L-1}_u\right)q(\mathbf{W}^L|\mathbf{W}^{1:L-1}_{o}, \mathcal{D}_o)\prod_{l=1}^{L-1}q(\mathbf{W}^l|\mathbf{W}^{1:l-1}_{o}, \mathcal{D}_o)
\end{align}We can interpret this as the first $L-1$ layers being a feature selector whose weights have been inferred from just the original data, while the weights of the prediction head (last layer) are updated in light of the new data by sequential Bayesian inference. Although this might seem like a tenuous approximation, we find it works well in practice. We found that the approximation deteriorates for increasingly deep architectures---this is unsurprising given that it corresponds to the posteriors of an increasingly small proportion of weights getting updated.

\begin{figure}[h]\vspace{-2.5mm}
  \centering
  \includegraphics[width=0.54\linewidth]{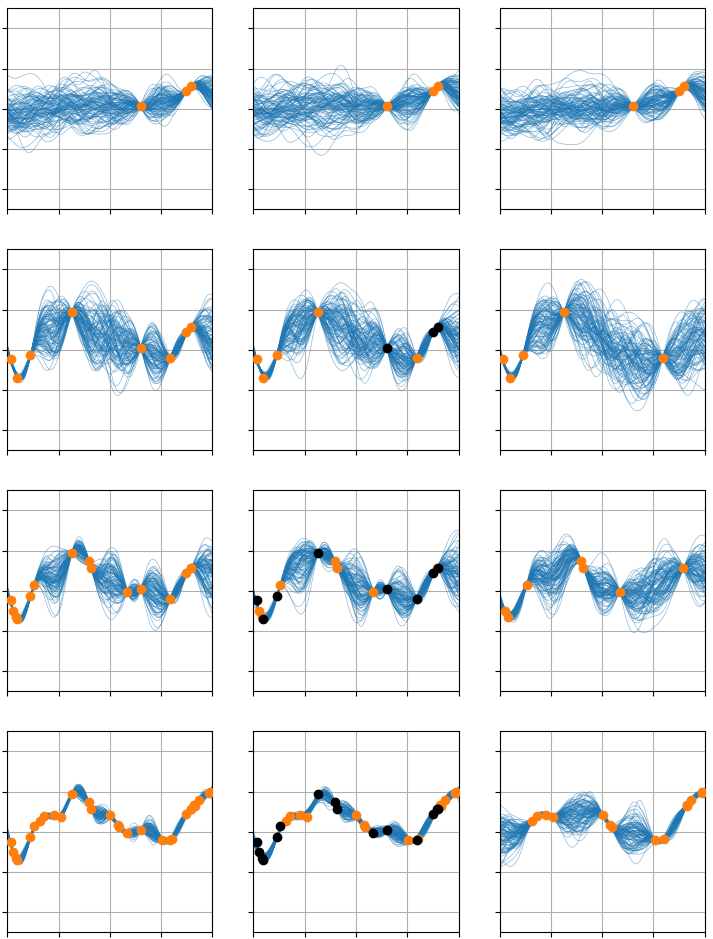}
  \vspace{-2.5mm}
  \caption{A demo of our online learning algorithm for the BNNP. Orange dots are context points, blue lines are posterior predictive samples, black dots are context points that we have lost access to. Each row incorporates more context data. The left-hand column corresponds to a BNNP given access to the \textit{full} context set at each increment, the central column corresponds to our online learning algorithm, and the right-hand column corresponds to the BNNP given \textit{only} the new context data.\label{fig:online}}
  \vspace{-2.5mm}
\end{figure}

\clearpage

\section{The Bayesian Neural Attentive Machine}\label{apd:bnam}

\subsection{Architecture}

\begin{figure}[h]
  \centering {
    \subfigure[Amortised attention layer]{\label{fig:amortised-attention-layer}%
      \includegraphics[width=0.5\linewidth]{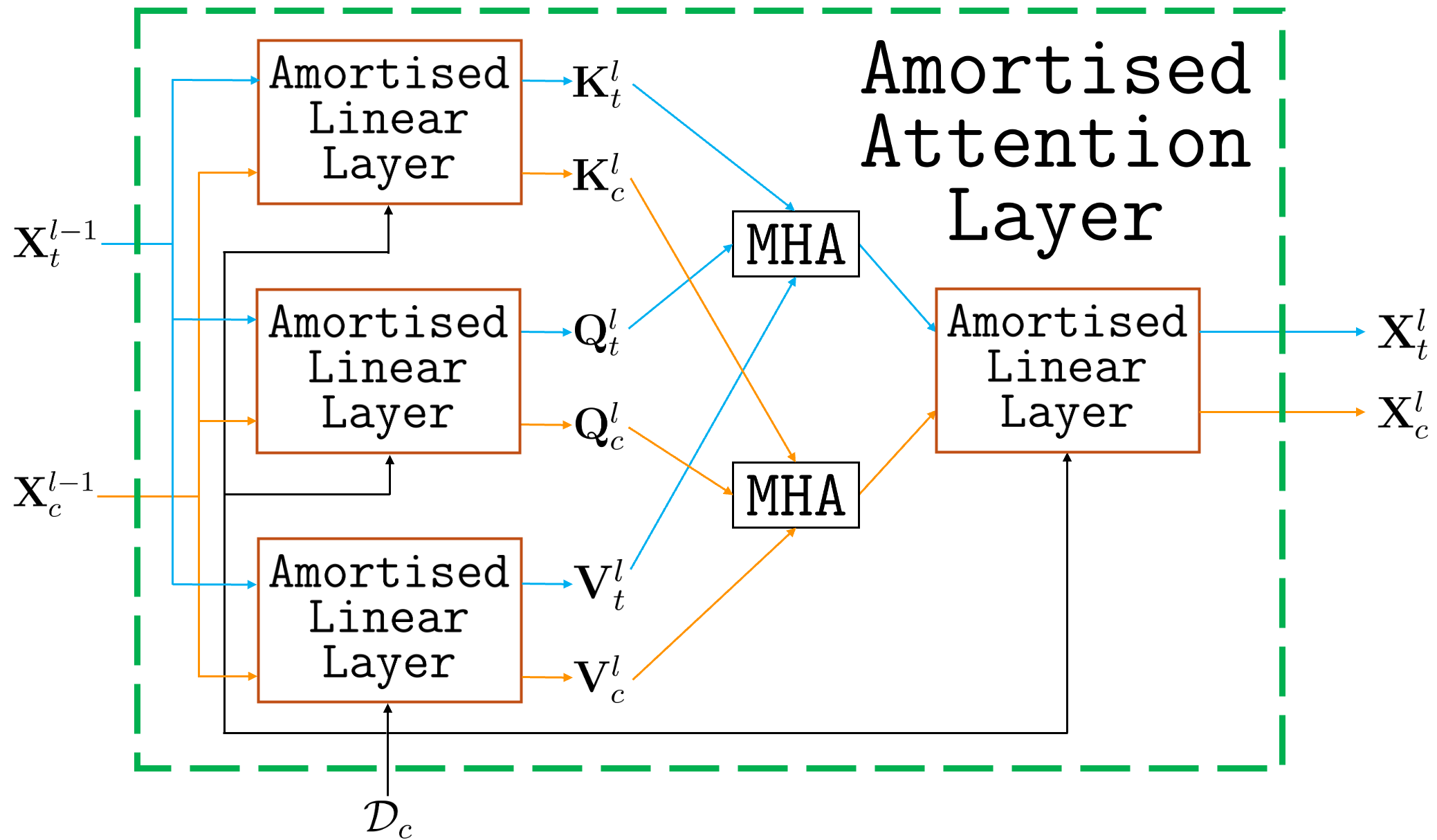}}%
      \quad
    \subfigure[Amortised attention block]{\label{fig:amortised-attention-block}%
      \includegraphics[width=0.48\linewidth]{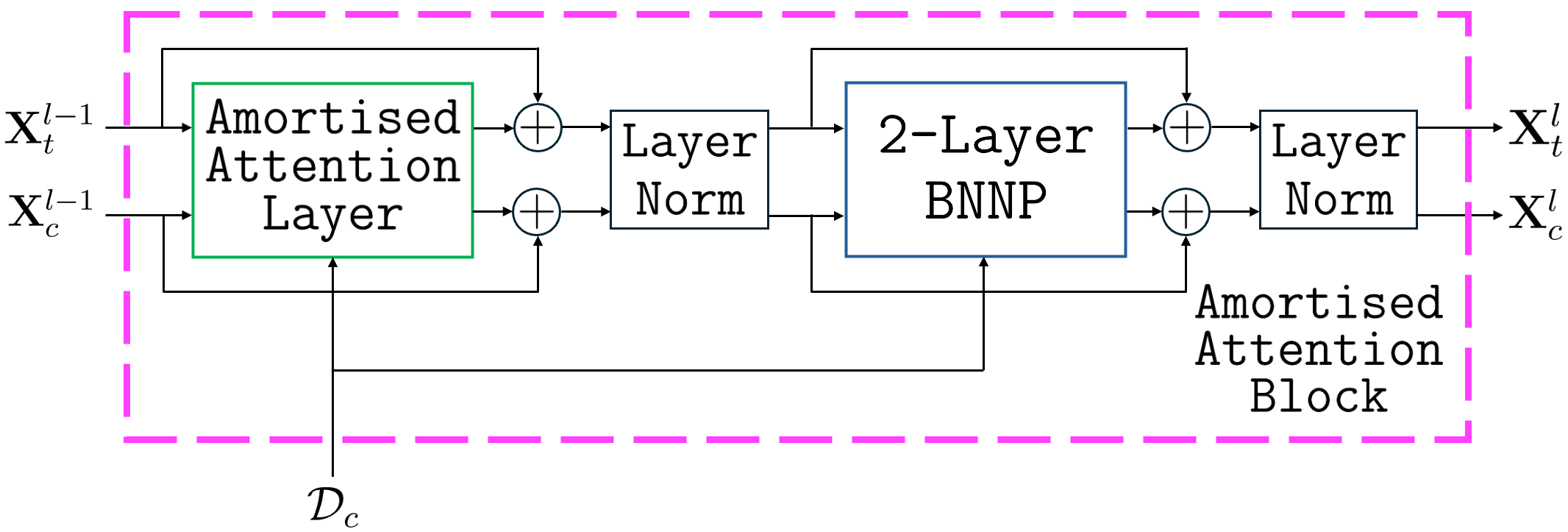}}
    \subfigure[BNAM]{\label{fig:bnam}%
      \includegraphics[width=0.48\linewidth]{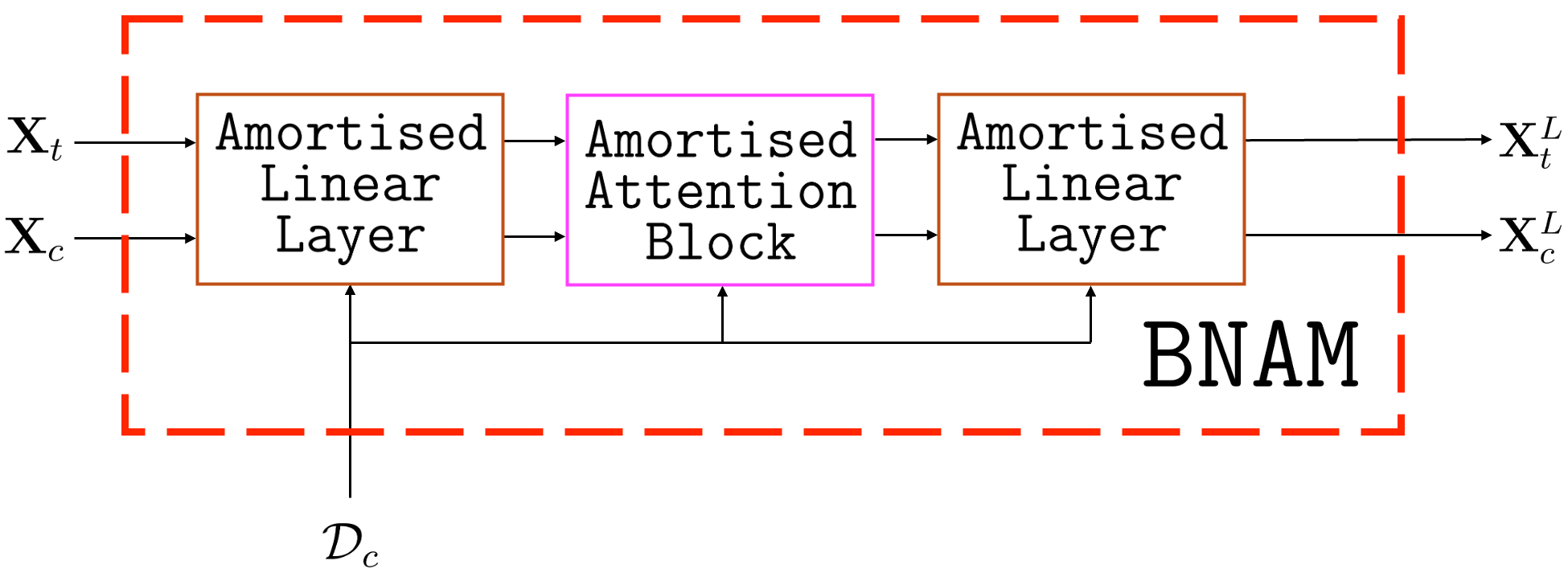}}
  }\vspace{-2.5mm}
  {\caption{Computational diagrams of the amortised attention layer (a), amortised attention block (b), and BNAM (c). Due to the numerous crossing lines in (a), we colour code the context and target input data paths as orange and light blue respectively. Arbitrarily many amortised attention blocks can be stacked sequentially in the BNAM; our diagram shows the simplest possible BNAM architecture.\label{fig:bnam-compdiags}}}\vspace{-2.5mm}
\end{figure}

We see in \Cref{fig:amortised-attention-layer} that amortised inference can be performed in an attention layer by using amortised linear layers in place of standard linear layers, where \texttt{MHA} is the usual multi-head dot-product attention mechanism acting on keys $\mathbf{K}$, queries $\mathbf{Q}$, and values $\mathbf{V}$. Similarly, in \Cref{fig:amortised-attention-block} we follow the standard approach \citep{vaswani2017attention} for constructing stackable attention \textit{blocks} from attention layers, residual connections, layer norms, and 2-layer MLPs, but replacing each of the attention layer and MLP with their amortised counterparts. In \Cref{fig:bnam} we show how amortised inference can be performed in a transformer by composing amortised linear layers and amortised attention blocks. We note that the resulting model can only be used in a somewhat unusual way for transformers; to map from test inputs $\mathbf{X}_t$ to predicted test outputs $\mathbf{Y}_t$ where attention is performed between the \textit{test} inputs, and where the posterior over the transformer's weights is estimated from a context set.

\subsection{Lack of Consistency}

As mentioned in the main text, the BNAM does not produce consistent predictive distributions over target outputs. As we shall see, it is the attention \textit{between test inputs} that causes this. For any finite set of target locations $\mathbf{X}_{t_{1:n}}$, the joint distributions $p_{\mathbf{X}_{t_{1:n}}}(\mathbf{Y}_{t_{1:n}})$ and $p_{\mathbf{X}_{t_{1:m}}}(\mathbf{Y}_{t_{1:m}})$ for $m<n$ are (marginalisation) \textit{consistent} if 
\begin{equation}
    p_{\mathbf{X}_{t_{1:m}}}(\mathbf{Y}_{t_{1:m}}) = \int p_{\mathbf{X}_{t_{1:n}}}(\mathbf{Y}_{t_{1:n}}) \mathrm{d}\mathbf{Y}_{t_{m+1:n}}.
\end{equation}In the case of the BNAM, we are interested in joint distributions of the form 
\begin{equation}
    p\left(\mathbf{Y}_{t_{1:n}}|\mathcal{D}_c, \mathbf{X}_{t_{1:n}}\right) = \int p\left(\mathbf{Y}_{t_{1:n}}|\mathbf{W}, \mathbf{X}_{t_{1:n}}\right)p_\Psi\left(\mathbf{W}|\mathcal{D}_c\right)\mathrm{d}\mathbf{W}
\end{equation}where the weights posterior $p_\Psi(\mathbf{W}|\mathcal{D}_c)$ is approximated by the variational posterior $q(\mathbf{W}|\mathcal{D}_c)=\prod_{l=1}^Lq(\mathbf{W}^l|\mathbf{W}^{1:l-1},\mathcal{D}_c)$ that we developed in the paper, and where the likelihood $p\left(\mathbf{Y}_{t_{1:n}}|\mathbf{W}, \mathbf{X}_{t_{1:n}}\right)$ is parameterised by a transformer $T_\mathbf{W}$ with weights $\mathbf{W}$. Plugging these terms in, we verify that the BNAM does not produce consistent joint predictive distributions
\begin{align}
    \int p\left(\mathbf{Y}_{t_{1:n}}|\mathcal{D}_c, \mathbf{X}_{t_{1:n}}\right)\mathrm{d}\mathbf{Y}_{t_{m+1:n}} &= \int \int p\left(\mathbf{Y}_{t_{1:n}}|T_\mathbf{W}(\mathbf{X}_{t_{1:n}})\right)q\left(\mathbf{W}|\mathcal{D}_c\right)\mathrm{d}\mathbf{W}\mathrm{d}\mathbf{Y}_{t_{m+1:n}} \\
    &= \int p\left(\mathbf{Y}_{t_{1:m}}|T_\mathbf{W}(\mathbf{X}_{t_{1:n}})\right)q\left(\mathbf{W}|\mathcal{D}_c\right)\mathrm{d}\mathbf{W}  \\
    &= p\left(\mathbf{Y}_{t_{1:m}}|\mathcal{D}_c, \mathbf{X}_{t_{1:n}}\right) \\
    &\neq p\left(\mathbf{Y}_{t_{1:m}}|\mathcal{D}_c, \mathbf{X}_{t_{1:m}}\right)
\end{align}and visualise the consequences of this in \Cref{fig:inconsistent}. By contrast, the BNNP models target outputs as conditionally independent given a weights sample for the MLP $f_\mathbf{W}$. This means the BNNP's likelihood decomposes as $p\left(\mathbf{Y}_{t_{1:n}}|f_\mathbf{W}(\mathbf{X}_{t_{1:n}})\right) = \prod_{i=1}^n p\left(\mathbf{y}_{t_{i}}|f_\mathbf{W}(\mathbf{x}_{t_{i}})\right)$ which in turn ensures consistency of the model through the fact that $p\left(\mathbf{Y}_{t_{1:m}}|\mathcal{D}_c, \mathbf{X}_{t_{1:n}}\right) = p\left(\mathbf{Y}_{t_{1:m}}|\mathcal{D}_c, \mathbf{X}_{t_{1:m}}\right)$. By the Kolmogorov extension theorem \citep{tao2011introduction}, consistency of a collection of joint distributions is needed for them to define a valid stochastic process, and it is for this reason that the BNAM does \textit{not} define a stochastic process. Note that the other condition required is exchangeability; both the BNNP and BNAM exhibit this through permutation invariance with respect to the context observations and permutation equivariance with respect to the target inputs. While the lack of consistency is generally a disadvantage, in settings for which the target inputs are always the same, this behaviour of the BNAM would not matter. An example of such a scenario is the common NP task of image completion via pixelwise meta-regression---in this case the target inputs are always the complete set of pixel coordinates.

\begin{figure}[h]
  {
    \subfigure[$p\left(\mathbf{Y}_{t_{1:m}}|\mathcal{D}_c, \mathbf{X}_{t_{1:m}}\right)$]{\label{fig:bnam-marginal}%
      \includegraphics[width=0.47\linewidth]{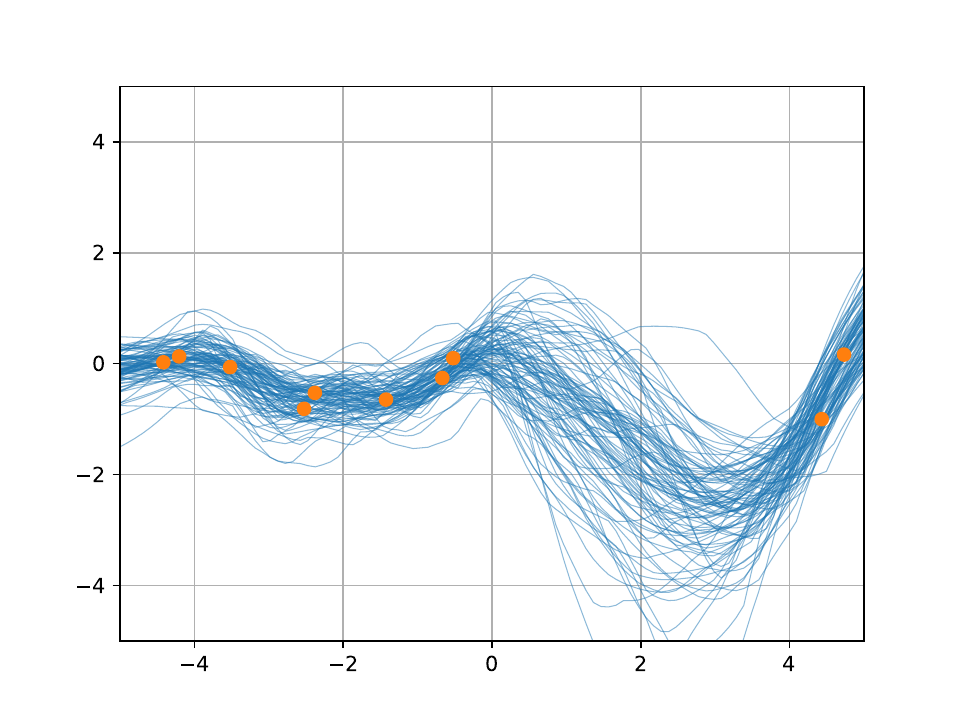}}%
    \subfigure[$p\left(\mathbf{Y}_{t_{1:m}}|\mathcal{D}_c, \mathbf{X}_{t_{1:n}}\right)$]{\label{fig:bnam-joint}%
      \includegraphics[width=0.47\linewidth]{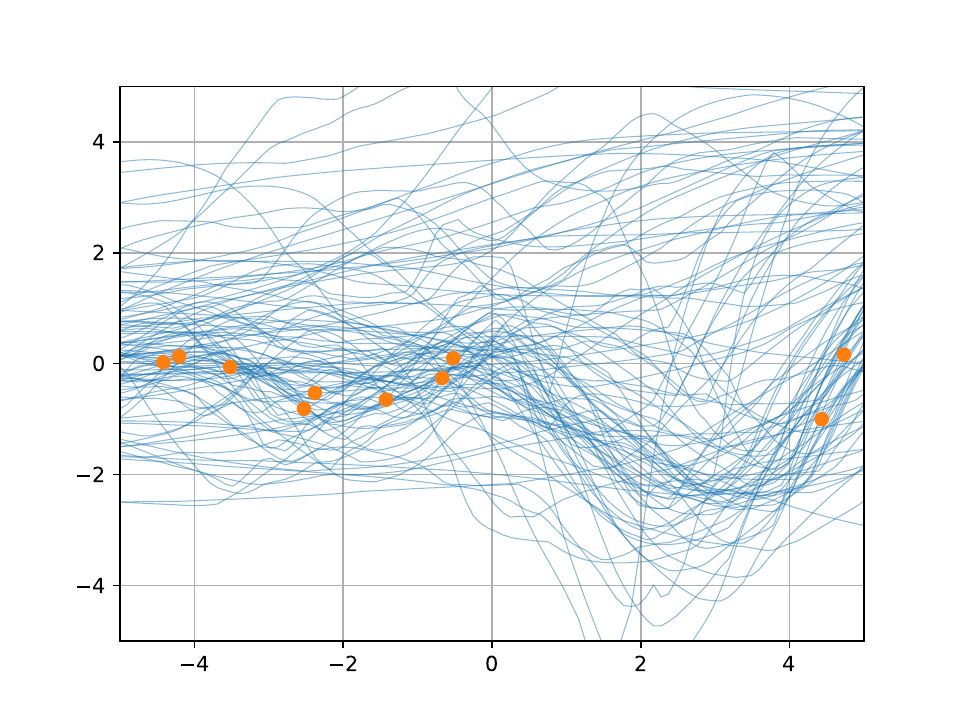}}
  }\vspace{-2.5mm}
  {\caption{Inconsistent predictive distributions of a BNAM trained on GP-prior generated data. The orange dots are context observations and the wiggly blue lines are predictive samples. $\mathbf{X}_{t_{1:m}}$ is generated as a uniform grid of 100 locations between $-5$ and $5$ via \texttt{torch.linspace(-5.0, 5.0, 100)}, while $\mathbf{X}_{t_{1:n}}$ is generated by appending a value of $100.0$ to $\mathbf{X}_{t_{1:m}}$ (i.e., $m=100$ and $n=101$). Observe that the joint distributions over $\mathbf{Y}_{t_{1:m}}$ are clearly very different. In other words, querying just a single extra target location can drastically change the BNAM's predictions, especially if the additional target location is OOD.\label{fig:inconsistent}}}\vspace{-2.5mm}
\end{figure}

\clearpage
\section{\edit{Experimental Details}}\label{apd:experiments}
The codebase can be found at \href{https://github.com/Sheev13/meta-bdl}{github.com/Sheev13/meta-bdl}.

\subsection{How good is the BNNP's approximate posterior?}

The BNN used to both generate and model the dataset had two hidden layers each with 20 units (i.e. overall architecture of [1, 20, 20, 1]) and ReLU nonlinearities. The prior was a zero-centered fully-factorised Gaussian with variances equal to the reciprocal of the number of input units for each layer. As described in the main text, the dataset was generated by sampling a set of weights from the BNN prior, uniformly sampling $N$ inputs $\mathbf{X}$ from $\mathcal{U}(-4.0, 4.0)$ where $N$ itself was sampled from $\mathcal{U}(21, 42)$, passing these inputs through the MLP parameterised with the sampled weights to obtain 24 corresponding clean outputs, and finally adding Gaussian noise of standard deviation $0.1$ to each output to obtain the outputs $\mathbf{Y}$. The dataset is visualised in \Cref{fig:elbo-data}.

\begin{figure}[h]\vspace{-2.5mm}
  \centering
  \includegraphics[width=0.5\linewidth]{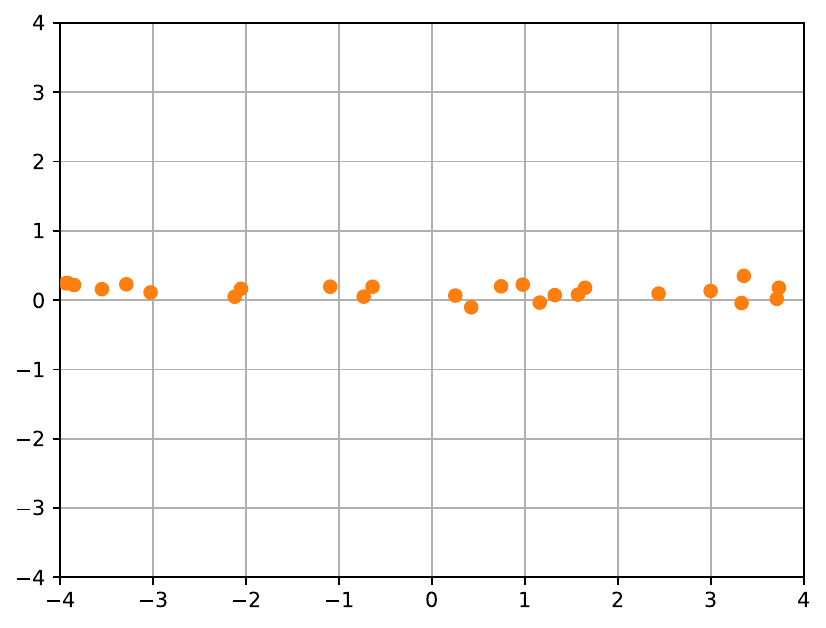}
  \vspace{-2.5mm}
  \caption{The synthetic dataset used in the investigation of approximate posterior quality. \label{fig:elbo-data}}
  \vspace{-2.5mm}
\end{figure}

For all approximate inference baselines, the standard deviation of the Gaussian likelihood was fixed to a particular value along the x-axes of \Cref{fig:elbo}, i.e. from \verb|torch.logspace(-2, 1, 40)|, and the approximate inference algorithm was run four times using random seeds $[21, 42, 69, 420]$ and the below training parameters. The dataset was the same throughout.

\paragraph{Meta BNNP.} Two BNNPs were trained in this experiment; one as a meta-learner and one as a traditional learner. Both BNNP's had fixed $\Psi$, i.e. their priors were fixed to that of the data-generating BNN, and both used inference networks with two hidden layers of $50$ units. The meta BNNP was trained using a meta-dataset of $50,000$ synthetically-generated datasets with identical generation process to the test dataset, save that the number of inputs was sampled from $\mathcal{U}(5, 50)$. The PP-AVI objective function was used, with the proportion of context points sampled from $\mathcal{U}(0.7, 0.9)$ with the remaining points being used as targets. A (meta-level) batch-size of 5 was used, and $8$ samples from the posterior were used in each step for Monte Carlo estimates of the objective function. The optimiser was Adam under PyTorch defaults but with a learning rate that was linearly tempered from 5e-3 to 5e-5. The model was trained for $20,000$ steps, or equivalently (for a batch-size of $5$ and $50,000$ datasets) $2$ epochs.

\paragraph{Non-meta BNNP and everything else.} These models were trained by optimising the standard ELBO objective function that was estimated at each step using $8$ samples from the posterior. All models were optimised via Adam under the PyTorch defaults and for $20,000$ steps of (within-task) full-batch training, i.e. for $20,000$ epochs. MFVI, UCVI, and LCVI used learning rates linearly tempered from 5e-3 down to 5e-5, FCVI from 5e-4 down to 5e-5, and GIVI from 1e-2 down to 5e-4 (it was much more stable to train, especially compared to FCVI). GIVI used $10$ inducing points, with the inducing inputs initialised to a random (size-$10$) subset of the training inputs

\paragraph{Evaluation.} The obtained ELBO for each method was computed by estimating the ELBO with $10,000$ samples from the posterior. The naive Monte Carlo estimator of the log marginal likelihood used $10$M samples (and the log-sum-exp trick). The KL divergences between approximate and true posteriors were then computed by taking the difference between the log marginal likelihood and ELBO.

\subsection{Do meaningful BNN priors exist?}

\subsubsection{1D regressions}\label{apd:1d_regressions}
For all data-generating processes in \Cref{fig:synthetic-prior}, BNNPs with two hidden layers of $64$ units were used (i.e. overall architecture of $[1, 64, 64, 1]$), with inference networks also with two hidden layers of $64$ units. In all cases, the Gaussian likelihood had initial standard deviation of $\sigma_y=0.05$ but this parameter was jointly optimised with all inference network weights $\Theta$ and prior parameters $\Psi$. Throughout, a meta-level batch size of $5$ was used.

\paragraph{Sawtooth, Heaviside, standard BNN prior functions.} For these data-generating processes, $100,000$ synthetic datasets were generated where in each case the number of datapoints was sampled from $\mathcal{U}(40, 100)$ and the proportion of these points which were used as the context set was randomly sampled from $\mathcal{U}(0.1, 0.5)$. The PP-AVI objective function was optimised via Adam under PyTorch defaults, with a learning rate linearly tempered down from 3e-3 to 1e-4 for $50,000$ steps (making for $2.5$ epochs of the meta-dataset), and where $32$ posterior samples were used to estimate the objective function at each step. For the sawtooth data and standard BNN prior data, ReLU activations were used throughout. For the Heaviside data, tanh activations were used. Although the results are probably reproducible with any activation functions under a large enough architecture, we found in our finite-network-width setting that picking the ``right'' activation function for the problem was quite helpful. For the Heaviside setting, multiplying the posterior predictive and KL terms within the PP-AVI objective by $0.1$ yielded slightly nicer-looking prior-predictive samples. We hypothesise that this is because the true data-generating process does not sit within the set of those that can be represented by a BNN prior, perhaps due to the discontinuities when the function alternates between $-1.0$ and $+1.0$. Note that an infinitely wide BNN might be able to represent such stochastic processes in theory, but in our finite-width case this objective function tempering seemed to help.

The data generated from a standard BNN prior followed an identical generation procedure to that of \Cref{fig:elbo-data}, save that here the BNN architecture had hidden layer width of $64$. The sawtooth data were obtained by randomly sampling inputs from $\mathcal{U}(-2.0, 2.0)$, obtaining a random sawtooth function via 
\begin{equation}
    f_\text{saw}(\cdot) = \varepsilon_1\times\Biggl(\frac{\cdot}{3}\Biggr) + 0.25\varepsilon_2 + 1.33\times\Biggl(\text{remainder}(\cdot, 0.75) - \frac{0.75}{2}\Biggr)
\end{equation}where $\varepsilon_1$ and $\varepsilon_2$ were independently sampled from $\mathcal{N}(0, 1)$, and then adding Gaussian noise of standard deviation $0.05$ to the outputs of the random function. The Heaviside data were generated by sampling inputs from $\mathcal{U}(-5.0, 5.0)$, sampling a function from a GP prior with squared-exponential covariance function of unit lengthscale (and outputscale), passing the inputs through this function, subtracting the mean of the outputs from each of them, and then mapping all positive outputs to $+1.0$ and all negative outputs to $-1.0$. Finally, Gaussian noise of standard deviation $0.01$ was added to each output. 

\paragraph{ECG data.} The ECG data were synthetically generated using \verb|neurokit2|'s \verb|ecg_simulate| function. At a high level, for each dataset a single ``heartbeat'' was isolated via signal cropping and then placed at a random location within a six second window, with the resting voltage level used to pad on either side of the heartbeat to fill the window. More specific details are best conveyed via the following code snippet.
{\tiny\begin{verbatim}
def generate_synthetic_ecg_task(
    fs=100,
    noise=0.001,
    n_range=None
):
    """Generate one synthetic ECG waveform and return (X, Y) torch tensors."""
    d = 6.0
    signal = nk.ecg_simulate(
        duration=8.0,
        sampling_rate=fs,
        heart_rate=20,
        noise=noise,
        method='simple'
    )
    signal -= np.mean(signal)
    signal /= (2*np.max(signal))

    single_wave_block = signal[int(3.5*fs):int(6.5*fs)] # 3s long
    zeros_block = np.concatenate(3*[signal[int(7.0*fs):]], axis=0) # 3s long
    split = int(np.random.uniform(0.0, 3.0) * fs)
    full = np.concatenate((zeros_block[:split], single_wave_block, zeros_block[split:]), axis=0)

    # normalize and convert to tensors
    t = np.linspace(-d/2, d/2, len(full), endpoint=False)
    X = torch.tensor(t, dtype=torch.float64).unsqueeze(1)
    Y = torch.tensor(full, dtype=torch.float64).unsqueeze(1)

    if n_range is not None:
        n = torch.randint(low=min(n_range), high=max(n_range), size=(1,))
        inds = torch.randperm(len(full))[:n]
        return X[inds], Y[inds]

    return X, Y
\end{verbatim}} $100,000$ datasets were generated in this way for training. The PP-AVI objective function was estimated via 24 posterior samples at each step and it was optimised via Adam under PyTorch defaults with a learning rate linearly tempered from 1e-3 to 5e-5. Between $2$ and $64$ context points (uniformly chosen) were randomly selected from the total of $6\times100=600$ datapoints in each task, with the rest being used as targets. $500,000$ training steps were carried out, making for 25 epochs. This BNNP used tanh nonlinearities.

\subsubsection{Image completions}
The MNIST dataset of $28\times28$ pixel images of handwritten digits \citep{lecun1989backpropagation} was used for this experiment. 2D regression datasets were generated by creating a 2D grid of coordinates $\mathbf{X}$ in $[-1.0, 1.0]^2$ and using each image pixel as a $y$ value corresponding to its respective grid coordinate $\mathbf{x}$. The $y$ values were mapped to the $[0, 1]$ range. Repeating this procedure for every image in the default MNIST training set resulted in a meta-dataset of $60,000$ with which to train a BNNP.

The BNNP's primary architecture consisted of four hidden layers of $64$ units, so the full sequence of layerwise activations was $[2, 64, 64, 64, 64, 1]$. The inference networks also had four hidden layers of $64$ units. ReLU nonlinearities were used throughout. Although the task was pixelwise regression, a Bernoulli likelihood was used since the outputs took values exclusively in $[0, 1]$. Note that this does not turn the problem into a classication problem since the output values were not strictly $0$ \textit{or} $1$, but rather anywhere in the interval between $0$ and $1$ (inclusive).

The PP-AVI objective function was used to train the model, with the proportion of points used as context points uniformly chosen from the interval $[0.01, 0.6]$. Training was conducted over two episodes, each of $250,000$ steps and taking roughly 20 hours on an NVIDIA H100 GPU. The first episode saw the learning rate linearly tempered from 5e-4 down to 1e-5, and the second from 5e-5 down to 1e-6. Both episodes used 16 Monte Carlo samples, a meta-level batch size of 5. The purpose of the two training episodes was due to cluster usage rules limiting GPU job length to 24 hours.

\begin{figure}[h]
\centering \vspace{-2.5mm}

% ------------------ Top row ------------------
\includegraphics[width=0.135\linewidth]{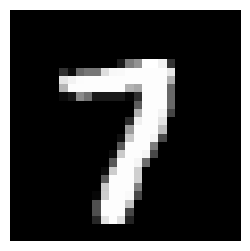}\hfill
\includegraphics[width=0.135\linewidth]{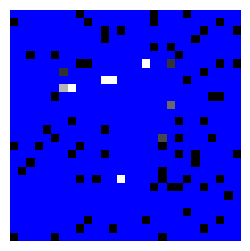}\hfill
\includegraphics[width=0.135\linewidth]{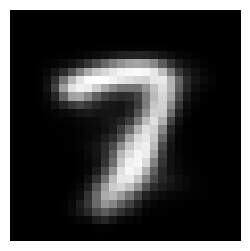}\hfill
\includegraphics[width=0.135\linewidth]{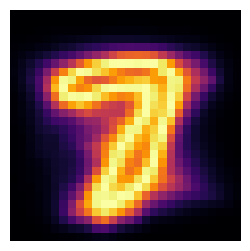}\hfill
\includegraphics[width=0.135\linewidth]{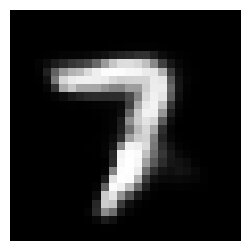}\hfill
\includegraphics[width=0.135\linewidth]{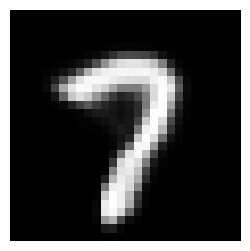}\hfill
\includegraphics[width=0.135\linewidth]{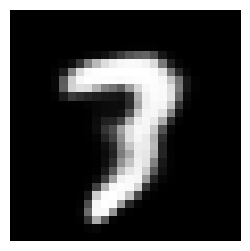}

\vspace{1mm}

% ------------------ Bottom row ------------------
\includegraphics[width=0.135\linewidth]{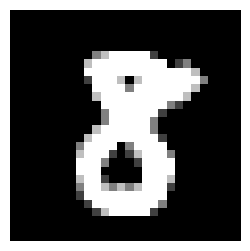}\hfill
\includegraphics[width=0.135\linewidth]{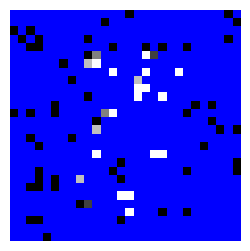}\hfill
\includegraphics[width=0.135\linewidth]{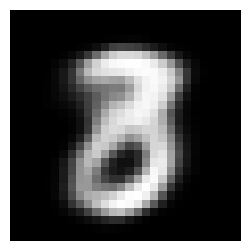}\hfill
\includegraphics[width=0.135\linewidth]{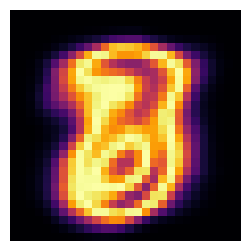}\hfill
\includegraphics[width=0.135\linewidth]{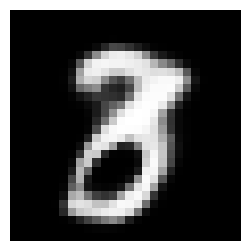}\hfill
\includegraphics[width=0.135\linewidth]{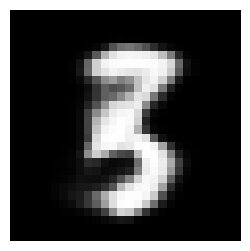}\hfill
\includegraphics[width=0.135\linewidth]{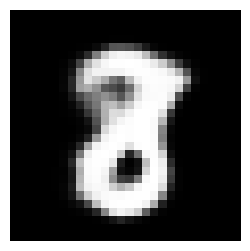}

% \vspace{1mm}

% ------------------ Manual labels ------------------
\makebox[0.135\linewidth][c]{\footnotesize (a) full}\hfill
\makebox[0.135\linewidth][c]{\footnotesize (b) context}\hfill
\makebox[0.135\linewidth][c]{\footnotesize (c) pred. mean}\hfill
\makebox[0.135\linewidth][c]{\footnotesize (d) pred. std}\hfill
\makebox[0.135\linewidth][c]{\footnotesize (e) sample 1}\hfill
\makebox[0.135\linewidth][c]{\footnotesize (f) sample 2}\hfill
\makebox[0.135\linewidth][c]{\footnotesize (g) sample 3}

\vspace{-2mm}
\caption{\footnotesize Two example image completion tasks executed by the trained BNNP. We see that with limited context data, the BNNP maintains a distribution over different types of digit that might explain the data; in both cases it is possible that the complete image could depict a three.}
\label{fig:mnist-completion}
\vspace{-2mm}

\end{figure}

\subsection{Does a good prior double as a lunch voucher?}

\subsubsection{Synthetic data settings}

\paragraph{Data.} The GP, Heaviside, and sawtooth training datasets were generated by uniformly sampling a number of datapoints from $\mathcal{U}(40, 100)$ and the sampled number of inputs were then uniformly sampled from $[-5.0, 5.0]$, $[-5.0, 5.0]$, or $[-2.0, 2.0]$ for the cases of GP, Heaviside, and sawtooth data respectively. The proportion of these points used as context points was sampled from $\mathcal{U}(0.1, 0.6)$ in all cases. For sawtooth and Heaviside data, the noiseless outputs were obtained in the same way as in \Cref{apd:1d_regressions}, and for GP data the noiseless outputs were obtained by sampling a function from a zero-mean and squared-exponential kernel GP prior with a lengthscale of $0.5$ and passing the inputs through the sampled function. The Heaviside data were corrupted by adding independently sampled Gaussian noise of standard deviation $0.01$ to the outputs, whereas the GP and sawtooth data were corrupted by Gaussian noise of standard deviation $0.05$. For the experiment itself, $16$ datasets of each kind were generated once and saved, but with the difference that in all cases the context proportion was uniformly sampled from $\mathcal{U}(0.05, 0.5)$ and, where the input domain was previously $[-5.0, 5.0]$, it was changed to $[-4.0, 4.0]$.

\paragraph{BNNP configuration and training.} The BNNP had two hidden layers of $48$ units in the primary architecture as well as the inference networks. We used SiLU, tanh, and ReLU activations for each of GP, Heaviside, and sawtooth data respectively. In each case, a meta-dataset of size $100,000$ was constructed and $100,000$ training steps performed over a meta-level batch size of $5$, giving a total of $5$ epochs. The PP-AVI objective function was used with $32$ Monte Carlo samples, a learning rate that was linearly scheduled down from 3e-3 to 5e-5, and Adam under PyTorch defaults. The likelihood was a Gaussian whose variance was learned jointly with $\Psi$ and $\Theta$. Training took one or two hours on an NVIDIA A100 GPU.

\paragraph{Baselines.} For each of the 16 test datasets, each approximate inference algorithm was run twice; once for a BNN with a standard prior (zero-mean, standard deviation equal to the inverse of the square root of layer width) and once for a BNN with the corresponding learned prior from the BNNP. All BNNs had the same architecture of two hidden layers of $48$ units and with the same nonlinearity as the BNNP in each data setting. Each approximate inference algorithm used a Gaussian likelihood with standard deviation $0.05$ and Adam under PyTorch defaults was used in all cases where optimisation was required. The remaining settings were as follows:
\begin{itemize}
    \item \textbf{\textit{SWAG.}} The network parameters were initialised by sampling from the prior before executing $5,000$ training steps with a learning rate of 5e-3. The SWAG algorithm itself \citep{maddox2019asimple} was then used to compute an approximate posterior by updating the SWA statistics $100$ times at every $25$-th iteration of PyTorch default gradient descent (not Adam) with a learning rate of 1e-2. The rank of the low-rank correction to the diagonal covariance matrix, $K$, was $64$. The objective function was the unnormalised log posterior density of the training density.
    \item \textbf{\textit{MFVI, GIVI, and BNNP.}} To ensure a fair comparison to the non-meta-models, the pre-trained BNNP itself was not used as a baseline; instead a new BNNP (with fixed likelihood noise of $0.05$) was trained in each case with the prior parameters $\Psi$ fixed and only the inference network parameters $\Theta$ set to trainable. For all three of these models, the (standard VI) ELBO was optimised for $75,000$ steps under a learning rate linearly tempered from 5e-3 down to 1e-4. $8$ Monte Carlo samples were used in estimating the ELBO via the reparameterisation trick. GIVI used $24$ inducing points, the inputs of which were initialised at a random subset of the context observations.
    \item \textbf{\textit{MCMC.}} Because it allowed for easier tuning of hyperparameters, LMC was implemented as HMC with a single leapfrog integrator step for each proposal (rather than the usual discretised Langevin dynamics). HMC was performed using a step size of 5e-4 for $5,000$ steps each with $100$ leapfrog integrator steps. We used a burn-in period of $500$ and thinned the samples by keeping only every $50$-th sample. These hyperparameters were selected by looking at unnormalised log-posterior/log-likelihood/log-prior vs iteration plots and sample autocorrelation plots respectively. In the case of the learned prior for Heaviside data, a step size of 1e-4 was used instead. For LMC we used a step size of 1e-3 (except for learned Heaviside prior, which had a stepsize of 5e-4) for $250,000$ steps. The burn-in and thinning parameters were $10,000$ and $5,000$ respectively, chosen via the same method as for HMC. A Metropolis-Hastings accept-reject step was performed after each new proposal, and in all cases the average acceptance frequency never fell below $0.99$ (which is how the step sizes were tuned). There was no minibatching.
\end{itemize}

\subsubsection{ERA5 setting}

\paragraph{Data.} The precipitation data were obtained from the ERA5 Land dataset \citep{munoz2021era5}. We focussed on an area of central Europe defined by coordinates $[50^\circ, 5^\circ, 45^\circ, 12^\circ]$ corresponding to the rectangle borders in the order of [N, W, S, E]. We used ten years of data from the beginning of 2010 through to the end of 2019. The grid granularities were $0.1^\circ$ and $12$ hours in the spatial and temporal axes respectively. The weather prediction task was spatial interpolation, where the goal is to impute missing grid values. The features used were 2D spatial coordinates and air temperature at 2m above ground-level, and the target was total precipitation in mm. All features and the targets were normalised, meaning each feature (and the targets) that were fed to the model had zero mean and unit standard deviation. Each timestep corresponded to a single task, giving a collection of about $\frac{24}{12}\times365\times10=7300$ tasks, each with $\frac{50-45}{0.1}\times\frac{12-5}{0.1}=3500$ total observations. Each task is therefore an image completion, save that the auxiliary information of ground air temperature was also provided to the model. A randomly chosen 16 datasets were reserved for testing, with the rest being used for meta-training the BNNP. For the training tasks, each time the task was encountered by the model (i.e. in different epochs) a random 25\% of the observations were selected, and of the selected observations the context points were randomly selected in a proportion sampled from $\mathcal{U}(0.01, 0.5)$ with the rest serving as targets, meaning that during training each task had between $0.01\times0.25\times3500\approx9$ and $0.5\times0.25\times3500\approx438$ context observations. The subsampling was done purely for faster training. For the test tasks, the proportion of context points for each task was sampled from $\mathcal{U}(0.025, 0.25)$ and these points were selected randomly out of the points \textit{not} in Switzerland. These splits were performed once and saved for the later evaluation across different models. The omitting of Swiss context points was done to ensure presence of a larger region without any context, in which the choice of prior would heavily influence predictions.

\paragraph{BNNP configuration and training.} The BNNP had three hidden layers of $64$ units in the primary architecture as well as the inference networks, with ReLU activations throughout. The prior was learned by training the BNNP on the metadataset for $250,000$ training steps with a meta-level batch size of $5$, giving a total of roughly $70$ epochs. The PP-AVI objective function was used with $24$ Monte Carlo samples, a learning rate that was linearly scheduled down from 1e-3 to 1e-5, and Adam under PyTorch defaults. The likelihood was a Gaussian whose variance was learned jointly with $\Psi$ and $\Theta$. Training took around 22 hours on an NVIDIA H100 GPU.

\paragraph{Baselines.} For each of the 16 test datasets, each approximate inference algorithm was run twice; once for a BNN with a standard prior (zero-mean, standard deviation equal to the inverse of the square root of layer width) and once for a BNN with the corresponding learned prior from the BNNP. All BNNs had the same architecture of three hidden layers of $64$ units and ReLU nonlinearities. Each approximate inference algorithm used a Gaussian likelihood with standard deviation $0.05$ and Adam under PyTorch defaults was used in all cases where optimisation was required. The remaining settings were as follows:
\begin{itemize}
    \item \textbf{\textit{SWAG.}} The network parameters were initialised by sampling from the prior before executing $10,000$ training steps with a learning rate of 1e-3. The SWAG algorithm itself \citep{maddox2019asimple} was then used to compute an approximate posterior by updating the SWA statistics $200$ times at every $25$-th iteration of PyTorch default gradient descent (not Adam) with a learning rate of 5e-3. The rank of the low-rank correction to the diagonal covariance matrix, $K$, was $64$. The objective function was the unnormalised log posterior density of the training density.
    \item \textbf{\textit{MFVI, GIVI, and BNNP.}} The treatment of BNNPs was the same as in the synthetic setting. For all three of these models, the (standard VI) ELBO was optimised for $75,000$ steps under a learning rate linearly tempered from 5e-3 down to 1e-4. $8$ Monte Carlo samples were used in estimating the ELBO via the reparameterisation trick. GIVI used $24$ inducing points, the inputs of which were initialised at a random subset of the context observations.
    \item \textbf{\textit{MCMC.}} For the same reason as in the synthetic setting, SGLD was implemented as SGHMC with a single leapfrog integrator step for each proposal (rather than the usual discretised Langevin dynamics). SGHMC was performed using a step size of 1e-4 for $5,000$ steps each with $100$ leapfrog integrator steps. We used a burn-in period of $2,000$ and thinned the samples by keeping only every $50$-th sample. These hyperparameters were selected by looking at unnormalised log-posterior/log-likelihood/log-prior vs iteration plots and sample autocorrelation plots respectively. For SGLD we used a step size of 5e-4 for $200,000$ steps. The burn-in and thinning parameters were $50,000$ and $2,500$ respectively, chosen via the same method as for SGHMC. No Metropolis-Hastings accept-reject step was performed, and a minibatch size of $128$ was used in both cases. In cases for which the total number of context points was less than or equal to the batch size, the full batch was used.
\end{itemize}

\subsubsection{Evaluation}
In all data settings, the metrics of mean absolute error and log posterior predictive density were used. In both cases, they were computed on the target set in each of the 16 datasets after training. For computing the metrics, $1000$ posterior samples were obtained from the variational and SWAG posteriors, and all the available samples after burn-in and thinning for the MCMC methods were used ($60$ samples in the ERA5 setting, $48$ LMC samples and $90$ HMC samples in the synthetic setting). The mean absolute error for each test task was computed as the absolute error between the true targets and the predictive mean (of the predictive samples), averaged over all target points in the task. These 16 scores were then averaged to give the final metric, with the standard error of these 16 values used as the errorbars. The log posterior predictive density for each task was computed as the likelihood density of the full target set averaged over posterior samples (i.e. Monte Carlo estimate of the posterior predictive integral), and then divided by the number of target points (for easier comparison between tasks with different number of targets). Of course, these computations were performed in the log-domain via the LogSumExp trick. These 16 scores were then averaged to compute the final metric for each method, with the standard error of the 16 values used as the errorbars. Note that for the MAE metric in the ERA5 setting, the data were transformed from the normalised domain to the original domain.

\clearpage

\subsubsection{Cloudy with a chance of meaningful priors}
In this section we qualitatively compare precipitation patterns sampled from each of 1.) the real world (i.e., ERA5), 2.) a standard BNN prior, and 3.) our BNNP-learned prior. The comparison is shown in \Cref{fig:era5-prior}. We see from the examples in the left-hand column that real-world precipitation can be somewhat spatially sparse, with much of the area receiving no precipitation. This behaviour is reflected in the samples from the learned prior in the right-hand column, but \textit{not} in those of the standard BNN prior. Furthermore, precipitation values must be non-negative---this is observed in the real-world examples as well as those of the learned prior, but not for the standard prior (see the minimum values on the colour-value key for each example). Finally, the alps cause variations in temperature with a relatively high spatial frequency (on the scale of our area), and the models are given realistic settings of this feature from the ERA5 dataset. It is the variation of this feature that causes the rough patterns over southern Switzerland in the sampled precipitation patterns from the standard BNN prior, which appears to be overly/na\"ively sensitive to it. By contrast, the BNNP seems to have learned a prior that models this feature more appropriately, most likely by encoding the interaction between this feature and the location. For example, we know that altitude varies across our region and that precipitation depends on both temperature \textit{and} altitude, suggesting the temperature feature's importance might vary across the region. We suspect the BNNP's learned prior incorporates such a notion of spatially-variable feature importance.

It is clear that the learned prior more faithfully models real-world precipitation behaviour than the standard BNN prior does. Indeed, a model for precipitation that allows for negative values---raindrops flying up from the earth and into the clouds---is somewhat questionable. It is no wonder, then, that predictive performance is also superior when using the learned prior than when using a na\"ive one (\Cref{fig:prior-transfer}). In general, we would encourage researchers and practitioners to question their use of standard BNN priors, and to consider the possibility of meta-learning their priors instead.

\begin{figure}[h]
  \centering {\vspace{-2.5mm}
      \includegraphics[width=0.32\linewidth]{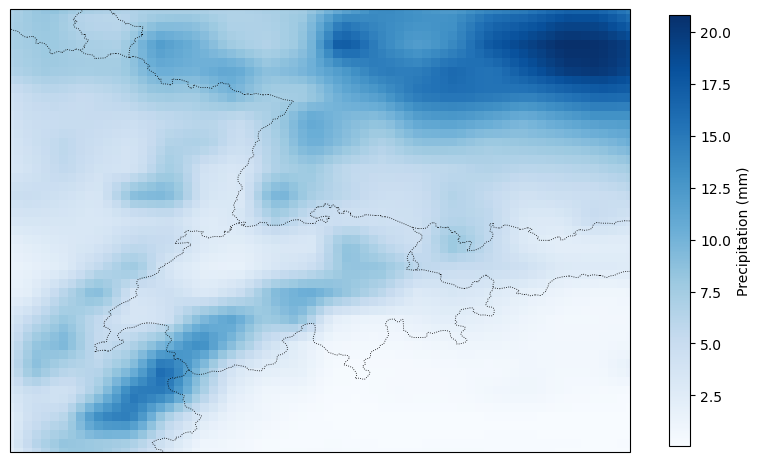}%
      \hfill
      \includegraphics[width=0.32\linewidth]{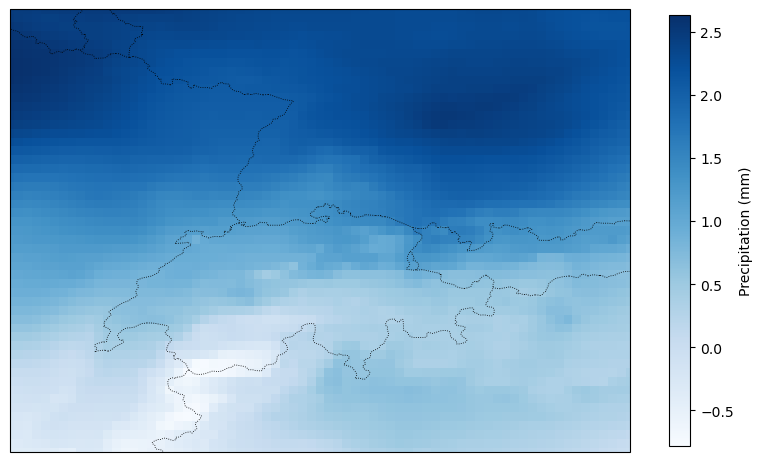}
      \hfill
      \includegraphics[width=0.32\linewidth]{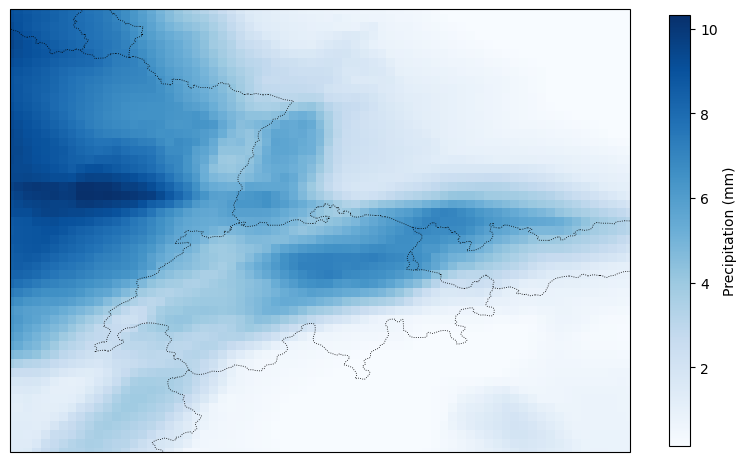}
      \hfill
      \includegraphics[width=0.32\linewidth]{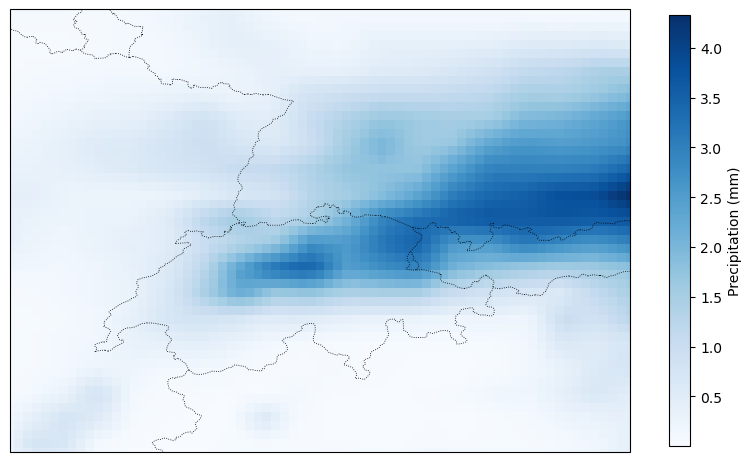}%
      \hfill
      \includegraphics[width=0.32\linewidth]{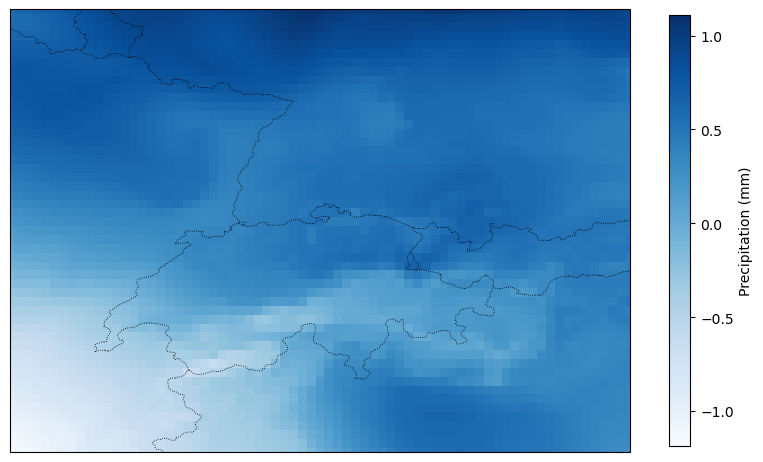}
      \hfill
      \includegraphics[width=0.32\linewidth]{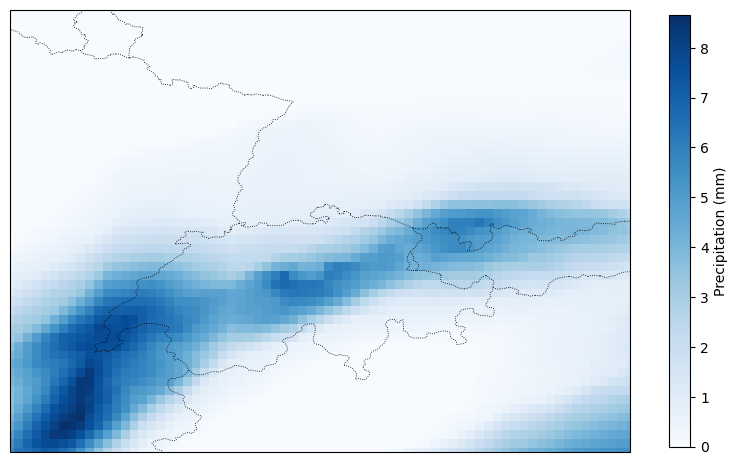}
      \hfill
      \includegraphics[width=0.32\linewidth]{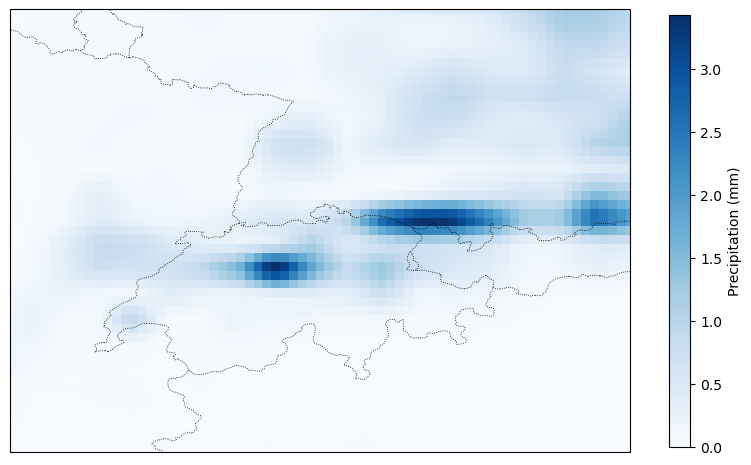}%
      \hfill
      \includegraphics[width=0.32\linewidth]{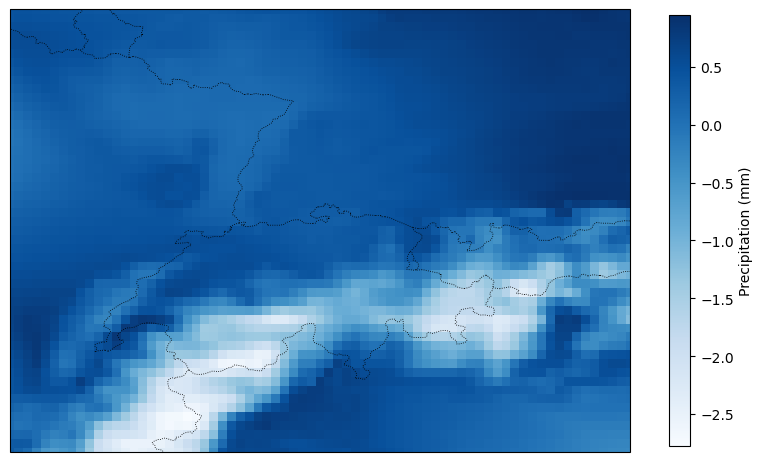}
      \hfill
      \includegraphics[width=0.32\linewidth]{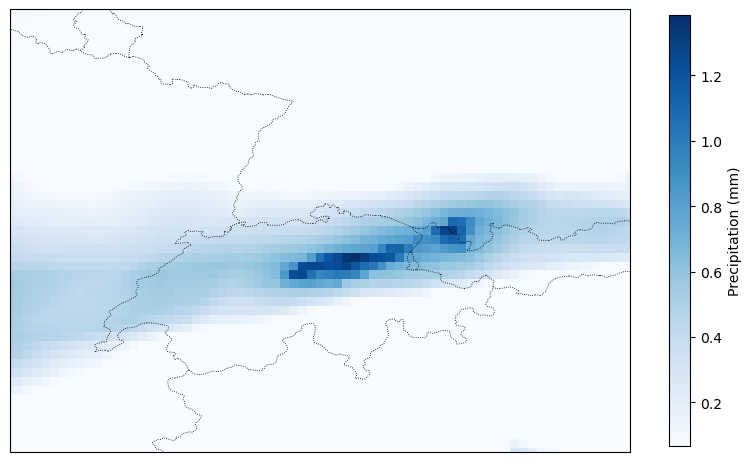}
      \makebox[0.32\linewidth][c]{\footnotesize (a) ERA5}\hfill
      \makebox[0.32\linewidth][c]{\footnotesize (b) standard prior}\hfill
      \makebox[0.32\linewidth][c]{\footnotesize (c) learned prior}\hfill
  }\vspace{-2.5mm}
  {\caption{\footnotesize{Sampled precipitation patterns over an area of central Europe centered on Switzerland. The samples from the learned BNN prior are much more realistic than those of the standard BNN prior.\label{fig:era5-prior}}}}\vspace{-2.5mm}
\end{figure}

\subsection{Can a restricted prior improve meta-level data efficiency?}

\subsubsection{Data}
\paragraph{Abalone.}
We downloaded the Abalone dataset \citep{nash1994abalone} from the UCI repository. These data have 8 input features such as height, diameter, shucked weight. One of the features is the specimen's sex, which is one of three values: male, female, or infant. We separated the data into three partitions based on the value of the sex feature, and then discarded this feature in all cases to leave only 7 features. All input features were then normalised to have zero mean and unit standard deviation, and this was done globally (rather than within each of the three datasets). The output variable for each specimen is the number of rings, which is closely related to the age of the specimen. This variable was globally normalised in the same way as the features. The datasets corresponding to male and female specimens were used as the meta-dataset for training the meta-learners, so $|\Xi|=2$. The male and female datasets had 1528 and 1307 observations respectively. The dataset corresponding to infant abalones was used as the test dataset, and it contained 1342 points. Similarly to how the ERA5 test data included a large region (Switzerland) with no context points in order to emphasise the impact of the prior on predictions, in the infant abalone daaset we constructed a hypercube in the feature space that contained roughly $20\%$ of points and set all of these to targets. Of the remaining points, a random selection were selected as context points such that the overall proportion of context points was $0.25$. This split was performed once and saved so that all baselines were evaluated on the test task with the same context/target split.

\paragraph{Paul15.}
We downloaded the Paul15 mouse bone marrow cell dataset \citep{paul2015transcriptional} using \verb|scanpy| \citep{wolf2018scanpy}. The raw dataset contains roughly $3,400$ features, where each one represents the degree to which a certain gene is expressed. After normalisation, we used PCA to map this down to the top $100$ eigen-genes. The target feature for this dataset is the \textit{pseudotime}, a value computed by biologists that serves as a metric for cell age/development stage. A young stem cell will have a low pseudotime, whereas a cell that has already specialised for a certain role will have a higher pseudotime. We normalise the psuedotime feature as well. \cite{paul2015transcriptional} provide a partitioning of the data into $19$ clusters, broadly corresponding to cells of the same type/specialisation (or lack thereof for young cells). We use these clusters to convert the problem into a meta-learning one. We randomly selected $10$ of the clusters to use as the training meta-dataset, and of the remaining $9$ we selected the task with the most datapoints to use as the test task. This test task was split into context and target sets with a context proportion of $0.2$, and this split was saved and used for all baselines. The numbers of datapoints in the $10$ training tasks were $153$, $69$, $373$, $164$, $9$, $43$, $246$, $180$, $167$, and $63$. The test task had a total of $329$ datapoints.

\subsubsection{Models}

\paragraph{BNNP.} For the Abalone setting the BNNP had three hidden layers of $64$ units, as did its inference networks. For the Paul15 setting, there were four hidden layers of $96$ units throughout instead. SiLU nonlinearities were used in both cases. The likelihood was Gaussian with a learned standard deviation that was initialised to $0.1$. The \textit{prior-learnability} decimal represents the proportion of all the parameters of the prior $\Psi$ for which \verb|parameter.requires_grad| was set to \verb|True|. Each weight in the BNNP's primary architecture has a prior mean, a prior variance, and prior covariances with the other weights corresponding to the same output neuron. To set a fixed proportion $p$ of these to trainable, we found the total number of weights $|\mathbf{W}|$, found the nearest integer to $p|\mathbf{W}|$, and then, proceeding through all weights in the network from the first layer through to the last, we set the prior parameters associated with each weight to trainable until we reach the integer. Training was performed by optimising the PP-AVI objective. The objective was estimated from $16$ Monte Carlo samples at each step, and optimised for $100,000$ steps. For the Abalone setting, the meta-level batch size was $2$ (i.e. full-batch training) but for Paul15 it was $5$, meaning training was performed for $100,000$ epochs for the Abalone setting and $50,000$ for the Paul15 one. In both cases, every time a training dataset was encountered, it was randomly split into a new context-target partition where the proportion of context points was itself randomly sampled from $\mathcal{U}(0.1, 0.6)$ each time. The optimiser was Adam under PyTorch defaults, with a learning rate that was linearly scheduled down from 5e-4 (Abalone) and 1e-4 (Paul15) to 5e-5.

\paragraph{Baselines.} In all cases, optimisation was performed with Adam under PyTorch defaults.
\begin{itemize}
    \item \textbf{\textit{BNNs.}} As with the BNNP, SiLU nonlinearities were used in all cases, and the BNNs had three hidden layers of $64$ units in the case of the Abalone data, but four hidden layers of $96$ units in the case of the Paul15 data. GIVI had $128$ inducing points in the Abalone setting and $196$ in the Paul15 one, and in both cases the inducing inputs were initialised to a random subset of the test set context set. As non meta-learners, these models were trained only on the test-set context set in each case. As demonstrated in \cite{bui2021biases} or in our first experiment, GIVI's approximate posterior is of high enough quality that the ELBO can be used for model selection, and so the Gaussian likelihood's standard deviation was learned in the case of GIVI (initialised to $0.1$), and the learned/optimal values were then used for MFVI (but frozen in place). The standard variational ELBO was optimised for $75,000$ steps, each time being estimated from $8$ Monte Carlo samples. The learning rate was linearly tempered down from 5e-3 to 1e-4.
    \item \textbf{\textit{NP and BNP.}} These models used encoders and decoders each with three hidden layers of $256$ units and SiLU nonlinearities throughout. They used a Gaussian likelihood with a standard deviation initialised to $0.1$ but optimised with the rest of the parameters. The objective function was the average target-set log posterior predictive density (see \Cref{apd:npml}), which, for each task, was estimated from $16$ latent variable posterior samples. The meta-level batch sizes were $2$ for the Abalone setting and $5$ for the Paul15 one. Optimisation was performed for $500,000$ iterations with a learning rate linearly tempered down from 1e-4 to 1e-5. These models were very slow to converge, hence the very high number of training steps. Every time a training dataset was encountered, it was randomly split into a new context-target partition where the proportion of context points was itself randomly sampled from $\mathcal{U}(0.1, 0.6)$.
    \item \textbf{\textit{AR-TNP.}} This model consisted of three self-attention blocks where every layer had dimensionality of $256$ and SiLU nonlinearities were used throughout. The objective function was again the average target-set log posterior predictive density, but as a type of conditional NP, the model outputs the marginals for each target and so no sampling was required, just multiplication of the (independent) marginals. The meta-level batch sizes were $2$ for the Abalone setting and $5$ for the Paul15 one. Optimisation was performed for $50,000$ steps (this model reached convergence much quicker than the other NPs), and every time a training dataset was encountered, it was randomly split into a new context-target partition where the proportion of context points was itself randomly sampled from $\mathcal{U}(0.1, 0.6)$. Not that this model was trained in the same way as a regular TNP(-D).
\end{itemize}

\subsubsection{Evaluation}
For all models, training was repeated four times for seeds $[21, 42, 69, 420]$. In each of the four cases and for all models, $1000$ predictive samples were used to compute the metrics, where for AR-TNP the samples were obtained via the autoregressive scheme developed by \cite{bruinsma2023autoregressive}. For each data setting, the metrics were computed as follows. The mean absolute error for a test task was computed as the absolute error between the true targets and the predictive mean (of the predictive samples), averaged over all target points in the task. The 4 scores obtained across the four seeds were then averaged to give the final metric, with the standard error of these 4 values used as the errorbars. The log posterior predictive density for each task was computed as the likelihood density of the full target set averaged over posterior samples (i.e. Monte Carlo estimate of the posterior predictive integral), and then divided by the number of target points (for easier comparison between tasks with different number of targets). Of course, these computations were performed in the log-domain via the LogSumExp trick. These 4 scores were then averaged to compute the final metric for each method, with the standard error of the 4 values used as the errorbars. Note that for the MAE metric, the data were transformed from the normalised domain to the original domain.

\end{document}

%% file: math_commands.tex
\usepackage{amsmath,amsfonts,bm}

\def\eqref#1{equation~\ref{#1}}
\def\1{\bm{1}}

\DeclareMathAlphabet{\mathsfit}{\encodingdefault}{\sfdefault}{m}{sl}
\SetMathAlphabet{\mathsfit}{bold}{\encodingdefault}{\sfdefault}{bx}{n}